%% file: EW.tex
\documentclass[10pt,twocolumn,letterpaper]{article}

\usepackage[pagenumbers]{cvpr}
\input{preamble}

\usepackage[hyphens]{url}
\usepackage{graphicx}
\usepackage{amsmath,amssymb}
\usepackage{amsthm}
\newtheorem{proposition}{Proposition}
\newtheorem{remark}{Remark}
\usepackage{booktabs}
\usepackage{multirow}
\usepackage{array}
\usepackage{algorithm}
\usepackage{algpseudocode}
\usepackage{placeins}
\usepackage{dblfloatfix}
\usepackage{afterpage}

\newcommand{\best}[1]{\textbf{\mbox{#1}}}
\newcommand{\down}{$\downarrow$}
\newcommand{\up}{$\uparrow$}
\newcommand{\ours}{PredErase}

\newcommand{\ewlab}{\sffamily\small}

\definecolor{cvprblue}{rgb}{0.21,0.49,0.74}
\usepackage[pagebackref,breaklinks,colorlinks,allcolors=cvprblue]{hyperref}

\def\paperID{*****}
\def\confName{CVPR}
\def\confYear{2027}

\title{PredErase: Training-Free Object-and-Effect Removal\\
with Predictive Latent Guidance}

\author{
Waikit Xiu$^{1}$ \quad
Qiang Lu$^{2}$ \quad
Junbiao Chen$^{2}$ \quad
Xiying Li$^{2,*}$\\
$^{1}$The University of Hong Kong \quad
$^{2}$Sun Yat-sen University
}

\begin{document}
\twocolumn[{%
\renewcommand\twocolumn[1][]{#1}%
\maketitle
\vspace{-24pt}
\begin{center}
    \setlength{\tabcolsep}{0pt}%
    {\ewlab
    \begin{tabular}{@{}*{6}{>{\centering\arraybackslash}p{\dimexpr\textwidth/6\relax}@{}}}
    Input & After Remove & Input & After Remove & Input & After Remove \\[-0.15em]
    \multicolumn{6}{@{}c@{}}{\includegraphics[width=\textwidth]{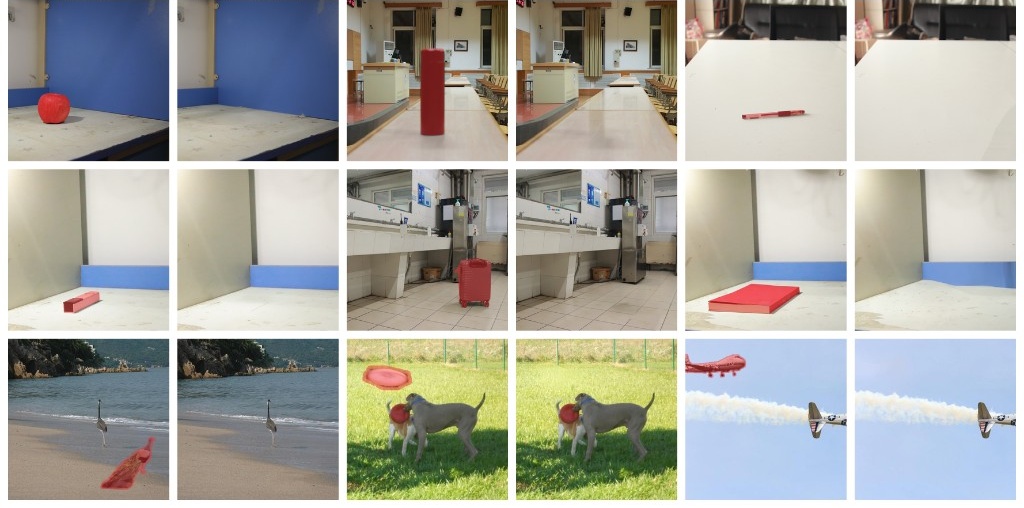}}
    \end{tabular}}
    \captionof{figure}{Object-and-effect removal with user-provided instance masks shown in red. \ours{} removes each masked object together with its cast shadows and contact shading, without task-specific training.}
    \label{fig:teaser}
\end{center}
\vspace{6pt}
}]
\begingroup
\renewcommand{\thefootnote}{\fnsymbol{footnote}}%
\footnotetext[1]{Corresponding author.}
\renewcommand{\thefootnote}{}%
\footnotetext{Code: \url{https://github.com/xiuwk0820-collab/PredErase}}
\endgroup

\begin{abstract}
Removing an object is not the same as filling its mask. Cast shadows and contact shading usually lie outside the user-provided instance mask $M_{\mathrm{obj}}$, so a frozen Fill model that edits only that mask leaves the object's photometric footprint on nearby surfaces. Supervised removers learn this joint erasure from paired clean plates. Training-free editors freeze pretrained weights, yet most still treat $M_{\mathrm{obj}}$ as the entire editable support and steer sampling with CLIP or DINO energies that do not predict the occluded scene.
We present \ours{}, a training-free inference procedure on frozen FLUX.2 and I-JEPA. The method separates where Fill may rewrite pixels from what structure should occupy the hole. A contact-band expansion $M_{\mathrm{flux}}\supseteq M_{\mathrm{obj}}$ exposes local residuals on the supporting plane. I-JEPA, pretrained for masked token prediction, supplies a context-conditioned hole target in representation space; sparse projected gradients align decoded Fill completions with that target inside the instance, while coordinates outside the packed support stay locked. Under instance-only masks on RemovalBench, RORD-Val, and DEFACTO-Val, \ours{} improves the native FLUX.2 backbone. Supervised removers remain stronger on several full-image appearance metrics; the supported claim is training-free object-and-effect editing of frozen Fill, not replacement of paired-data erasers.
\end{abstract}

% \vspace{-40pt}
\section{Introduction}
%Object removal seeks an edited image in which an unwanted instance appears never to have existed~\cite{pathak2016context,yu2018context,hu2019gatedconv,cao2023zits,suvorov2022lama,lugmayr2022repaint,rombach2022ldm,flux2024}.
Object removal is a fundamental task in image editing, with broad applications in photo retouching, content creation, and scene manipulation~\cite{pathak2016context,yu2018context,hu2019gatedconv,cao2023zits,suvorov2022lama}. 
Recent advances in diffusion and flow-matching generative models have substantially improved image inpainting, allowing user-specified object instances to be removed from complex scenes with increasingly plausible visual results~\cite{lugmayr2022repaint,rombach2022ldm,flux2024}. 
%Yet an object-only user mask $M_{\mathrm{obj}}$ rarely captures the full visual footprint of a grounded instance: cast shadows and contact shading often extend onto the surrounding support~\cite{wei2025omnieraser,wang2020instanceshadow,sanin2012survey,liu2021shadow,guo2023shadowformer,barron2012intrinsic,vicente2014shadow}.
However, removing an object is not equivalent to removing its presence. Objects interact with nearby surfaces and leave photometric traces, including cast shadows and contact shading~\cite{wei2025omnieraser,wang2020instanceshadow,liu2021shadow,guo2023shadowformer,barron2012intrinsic,sanin2012survey}.
We study \emph{object-and-effect removal} under instance-only masks: erase the specified object together with local, geometry-coupled residuals on adjacent support, while leaving unrelated content unchanged. Mirror reflections, refraction, and global illumination are outside this setting.

A frozen Fill model restricted to $M_{\mathrm{obj}}$ may complete the masked region while leaving those traces unchanged (Fig.~\ref{fig:teaser}). Scores computed only inside $M_{\mathrm{obj}}$ miss this failure. RemovalBench and RORD-Val therefore measure full-image agreement with clean-plate ground truth under the OmniEraser protocol~\cite{wei2025omnieraser} (Technical Appendix: Metrics).

%%%
%A frozen Fill model restricted to $M_{\mathrm{obj}}$ may plausibly complete the hole while preserving these outside-mask traces (Fig.~\ref{fig:teaser}).
%Because mask-restricted scores can miss this failure~\cite{zhang2018lpips,jayasumana2024cmmd,heusel2017fid,wang2004ssim}, RemovalBench and RORD-Val measure full-image agreement with clean-plate ground truth under the OmniEraser protocol~\cite{wei2025omnieraser}.
%We target this benchmark regime: grounded instances whose dominant external effects are cast shadows and contact shading on a nearby support, rather than reflections, floating objects, or effects that require a different physical model (supplementary material).
%%%%
Within this scope, supervised effect-aware removers close the gap with object--clean-plate pairs~\cite{ju2024brushnet,wei2025omnieraser,jiang2025smarteraser,zhao2026objectclear,wang2025metashadow,ke2022harmonizer}, typically at the cost of paired supervision and backbone-specific adaptation~\cite{hu2022lora}. Training-free editors instead freeze pretrained weights and steer sampling~\cite{meng2022sdedit,chung2022dps,couairon2023diffedit,yu2023freedom,epstein2023selfguidance,sun2025attentive,ekin2024clipaway}. On object removal they almost always keep $M_{\mathrm{obj}}$ as the editable support: self-attention surgery or CLIP energies suppress the instance \emph{inside} the mask, while shadows and reflections encoded in unmasked tokens are left untouched unless the user dilates the mask by hand. Semantic objectives (CLIP/DINO) further do not predict the scene hidden by the object.

We therefore treat training-free object-and-effect removal as constrained trajectory steering of a frozen Fill model~\cite{dhariwal2021diffusion,karras2022edm,lipman2023flow}, factored into \emph{where} the sampler may rewrite pixels and \emph{what} should occupy the hole (Fig.~\ref{fig:pipeline}).
Visible residuals that still depend on the instance after it is hidden must enter the editable support; we instantiate that support with a contact-band expansion $M_{\mathrm{flux}}\supseteq M_{\mathrm{obj}}$ for upright, ground-contact shadows.
For the hole, frozen I-JEPA~\cite{assran2023ijepa,lecun2022path} provides a predictive representation target: it is pretrained to infer masked tokens from context. Alignment back-propagates through the decoder into Fill latents in the manner of measurement guidance~\cite{chung2022dps}. Projection onto the packed support $P$ leaves visible coordinates unchanged.

We evaluate \ours{} on RemovalBench and RORD-Val under the OmniEraser protocol~\cite{wei2025omnieraser} and on DEFACTO-Val under the SmartEraser protocol~\cite{jiang2025smarteraser}, using frozen FLUX.2-klein-4B~\cite{flux2024,esser2024sd3}.
Relative to native FLUX.2, \ours{} improves CMMD and PSNR on RemovalBench and FID, CMMD, LPIPS, and PSNR on RORD-Val (Tables~\ref{tab:main}--\ref{tab:defacto}).
On RemovalBench, supervised OmniEraser retains lower FID and LPIPS.

Our contributions are:
\begin{itemize}
\item A training-free object-and-effect procedure that edits frozen Fill latents under instance-only masks, with no paired clean plates and no weight updates.
\item A factorization of editable support $M_{\mathrm{flux}}$ from I-JEPA hole prediction, with projected decoder-side updates confined to the packed latent mask.
\item Protocol-aligned comparisons against frozen Fill and training-free removers, with supervised erasers reported on the same splits, plus component and prior-swap ablations.
\end{itemize}

%Our contributions are summarized as follows:
%\begin{itemize}
%    \item A training-free formulation of \emph{instance-only} object-and-effect removal that couples frozen Fill with predictive world-model guidance, requiring neither paired removal data nor parameter updates.
%    \item A factorized inference design that separates effect-aware edit support ($M_{\mathrm{flux}}$) from context-conditioned hole prediction (frozen I-JEPA), with projected latent updates that preserve visible content.
%    \item Protocol-aligned evaluation on RemovalBench, RORD-Val, and DEFACTO-Val, including component ablations and matched I-JEPA/CLIP/DINOv2 critic swaps; primary claims use FLUX.2-klein-4B.
%\end{itemize}

\section{Related Work}

\paragraph{Inpainting and supervised removers.}
Context encoders and diffusion Fill models~\cite{pathak2016context,rombach2022ldm,flux2024} complete content inside an object mask, yet cast shadows and contact shading often remain outside that mask.
RemovalBench~\cite{wei2025omnieraser} and DEFACTO-Val~\cite{jiang2025smarteraser} make this failure mode measurable under instance-only masks.
OmniEraser~\cite{wei2025omnieraser}, SmartEraser~\cite{jiang2025smarteraser}, ObjectClear~\cite{zhao2026objectclear}, and MetaShadow~\cite{wang2025metashadow} learn effect-aware removal from paired clean plates, often with LoRA~\cite{hu2022lora}.
Shadow restorers such as ShadowFormer~\cite{guo2023shadowformer} instead assume shadow masks or shadow-centric data.
Across these lines, either the edit stays inside the user mask, or training needs paired supervision beyond the instance-only setting used here.

\paragraph{Training-free editing.}
Inference-time editors update frozen diffusion or flow latents without weight training~\cite{meng2022sdedit,chung2022dps}.
Attentive Eraser~\cite{sun2025attentive} and CLIPAway~\cite{ekin2024clipaway} are representative object-removal variants; related controls include DiffEdit~\cite{couairon2023diffedit} and FreeDoM~\cite{yu2023freedom}.
Most keep the object mask as the editable support and drive updates with CLIP~\cite{radford2021clip} or DINO~\cite{oquab2024dinov2} energies.
Shadows and reflections persist unless they already lie inside the mask, because they are encoded in unmasked tokens.

\paragraph{Predictive representations as hole priors.}
Masked predictive learning treats missing content as something to infer from visible context~\cite{lecun2022path,he2022mae,assran2023ijepa,bardes2024vjepa}.
I-JEPA predicts tokens for masked regions without decoding pixels; unlike CLIP or DINOv2, the pretraining question is exactly \emph{what occupies the hole}.
We use the frozen predictor at inference as a hole prior in token space.

\begin{figure*}[!t]
    \centering
    \includegraphics[width=\textwidth]{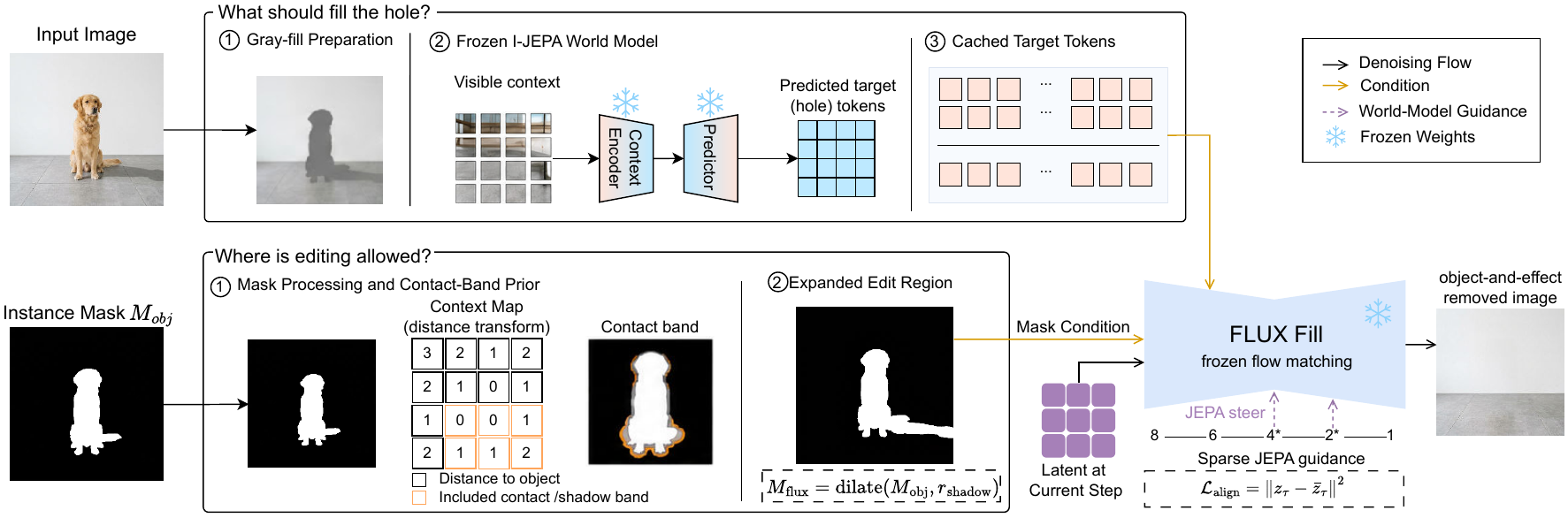}
    \caption{\ours{} training-free pipeline.
    \emph{Top:} gray-filled $I_{\mathrm{vis}}$ lets frozen I-JEPA predict and cache hole tokens $\mathbf{E}_{\mathrm{target}}$.
    \emph{Bottom:} a contact-band prior expands $M_{\mathrm{obj}}$ into the effect-aware Fill support $M_{\mathrm{flux}}$.
    \emph{Center:} frozen FLUX.2-klein-4B follows its native trajectory; at $t\in\{4,2\}$, decoded features are aligned with $\mathbf{E}_{\mathrm{target}}$ (Eq.~\ref{eq:align}) and projected so that the additional I-JEPA update is zero outside the editable latent support (Eq.~\ref{eq:pin}). No model weights are updated.}
    \label{fig:pipeline}
\end{figure*}

\section{Methodology}
\label{sec:method}

\subsection{Problem Setup}
\label{sec:precompute}

Given an RGB image $I \in \mathbb{R}^{3 \times H \times W}$ and a binary object mask $M_{\mathrm{obj}} \in \{0,1\}^{H \times W}$ (1 denotes removal), we seek an output $I_{\mathrm{out}}$ in which both the masked instance and its associated outside-mask effects, including cast shadows and contact shading, are absent.

For an editable support $M$, a frozen flow-matching Fill model induces the conditional law shown in Eq.~\eqref{eq:fill-law}:
\begin{equation}
    I' \sim p_{\mathrm{Fill}}\!\left(\,\cdot\mid I\odot(1-M),M\right).
    \label{eq:fill-law}
\end{equation}
With $M=M_{\mathrm{obj}}$, residuals outside the mask remain fixed as conditioning evidence, while the hole is drawn from a local completion prior that need not match the broader scene.
Training-free object-and-effect removal therefore faces two coupled issues: local Fill statistics underdetermine the revealed content, and mask-local sampling cannot edit residuals outside $M_{\mathrm{obj}}$.

\subsection{\ours{}}
\label{sec:overview}

\ours{} is a training-free inference method that freezes a Fill generator and an I-JEPA predictor, and only redirects the Fill sampling trajectory at test time (Fig.~\ref{fig:pipeline}).
I-JEPA is a hole prior in \emph{representation} space; Fill remains responsible for appearance, including residual illumination inside $M_{\mathrm{flux}}$.

\paragraph{Precomputation.}
The object mask is used in two ways.
On the predictive branch we gray-fill the instance as in Eq.~\eqref{eq:grayfill},
\begin{equation}
    I_{\mathrm{vis}} = I \odot \bigl(1-M_{\mathrm{obj}}\bigr) + \mathbf{g} \odot M_{\mathrm{obj}},
    \label{eq:grayfill}
\end{equation}
where $\mathbf{g}=(0.5,0.5,0.5)$ is fixed gray RGB,
and cache the frozen I-JEPA target $\mathbf{E}_{\mathrm{target}}$ once from this visible stream.
On the generative branch we expand $M_{\mathrm{obj}}$ into the effect-aware support $M_{\mathrm{flux}}$ (Eq.~\eqref{eq:mflux}).
The first branch sets what should replace the object; the second sets where Fill may edit.

\paragraph{Sampling.}
We \emph{source-prefill} by encoding $I\odot(1-M_{\mathrm{flux}})$, then run conditioned Fill steps (mask, shadow-aware prompt, CFG; see Technical Appendix: Implementation details).
$\mathbf{E}_{\mathrm{target}}$ is cached once; sampler seeds provide stochasticity.
At $t\in\mathcal{T}_{\mathrm{guide}}$, the decoded completion is aligned with $\mathbf{E}_{\mathrm{target}}$ via Eq.~\eqref{eq:align} and projected onto the editable latent subspace by Eqs.~\eqref{eq:grad}--\eqref{eq:pin}; other steps stay unchanged.
I-JEPA shapes hole structure inside $M_{\mathrm{obj}}$, while $M_{\mathrm{flux}}$ and the prompt expose residuals for Fill; the projection blocks guidance outside $P$.
Alg.~\ref{alg:erase} summarizes the routine; operators follow below.

\begin{algorithm}[!t]
    \caption{\ours{} inference. $\mathrm{FMStep}$ denotes one conditioned Fill transition (mask, prompt, CFG); $\mathrm{UpdateAndLock}$ implements Eqs.~\ref{eq:grad}--\ref{eq:pin}.}
    \label{alg:erase}
    \renewcommand{\algorithmicrequire}{\textbf{Input:}}
    \renewcommand{\algorithmicensure}{\textbf{Output:}}
    \begin{algorithmic}[1]
    \Require $I$, $M_{\mathrm{obj}}$
    \Ensure $I_{\mathrm{out}}$
    \State $I_{\mathrm{vis}} \leftarrow \mathrm{GrayFill}(I, M_{\mathrm{obj}})$
    \State Cache $\mathbf{E}_{\mathrm{target}}$ from frozen I-JEPA on $I_{\mathrm{vis}}$ (once)
    \State Build $M_{\mathrm{shadow}}$; $M_{\mathrm{flux}} \leftarrow \mathrm{dilate}_{4}(M_{\mathrm{obj}} \cup M_{\mathrm{shadow}})$
    \State Pack editable latent mask $P$ from $M_{\mathrm{flux}}$
    \State $\mathbf{z}_{T} \leftarrow \mathrm{Encode}(I \odot (1-M_{\mathrm{flux}}))$ \Comment{source prefill}
    \For{$t = T$ \textbf{down to} $1$}
        \State $\mathbf{z}_{t-1} \leftarrow \mathrm{FMStep}(\mathbf{z}_{t}, t;\, M_{\mathrm{flux}}, \mathrm{prompt})$
        \If{$t \in \mathcal{T}_{\mathrm{guide}}$}
            \State $\mathbf{z}^{\mathrm{pin}} \leftarrow \mathbf{z}_{t-1}$; decode preview $\hat{I}$
            \State Compute $\mathcal{L}_{\mathrm{align}}$ and $G \leftarrow \nabla_{\mathbf{z}}\mathcal{L}_{\mathrm{align}}(\mathbf{z}^{\mathrm{pin}})$
            \State $\mathbf{z}_{t-1} \leftarrow \mathrm{UpdateAndLock}(\mathbf{z}^{\mathrm{pin}}, G, P)$
        \EndIf
    \EndFor
    \State $I_{\mathrm{out}} \leftarrow \mathrm{Decode}(\mathbf{z}_{0})$
    \end{algorithmic}
\end{algorithm}

\subsection{Predictive Representation Prior}
\label{sec:guidance}

The main difficulty in training-free removal is not local texture synthesis, but what structure should occupy the hole once the instance is gone.
I-JEPA is pretrained to predict masked patch representations from visible context, which is the same interface as object removal.
We use the public ViT-H/14-1K stack as a frozen pair $(\phi_{\mathrm{JEPA}},\psi_{\mathrm{JEPA}})$ and cache $\mathbf{E}_{\mathrm{target}}$ as a context-conditioned hole target in patch space.
With $I_{\mathrm{vis}}$ gray-filled on the instance and $(\mathcal{I}_{\mathrm{vis}},\mathcal{I}_{\mathrm{mask}})$ the visible/object patch sets, the target is given by Eq.~\eqref{eq:jepa-target},
\begin{equation}
    \mathbf{E}_{\mathrm{target}} = \psi_{\mathrm{JEPA}}\!\bigl(\phi_{\mathrm{JEPA}}(I_{\mathrm{vis}}),\, \mathcal{I}_{\mathrm{vis}},\, \mathcal{I}_{\mathrm{mask}}\bigr)
    \label{eq:jepa-target}
\end{equation}
and is held fixed during sampling: $\phi_{\mathrm{JEPA}}(I_{\mathrm{vis}})$ supplies tokens on $\mathcal{I}_{\mathrm{vis}}$, from which $\psi_{\mathrm{JEPA}}$ predicts $\mathbf{E}_{\mathrm{target}}$ on $\mathcal{I}_{\mathrm{mask}}$.
Gray-filling $M_{\mathrm{obj}}$ (Eq.~\eqref{eq:grayfill}) keeps object appearance out of the visible stream; excluding $\mathcal{I}_{\mathrm{mask}}$ from the visible set preserves I-JEPA's masked-prediction interface.

\paragraph{Effect-aware edit region.}
\label{sec:shadow}
Cast shadows and contact shading on the supporting plane typically extend beyond $M_{\mathrm{obj}}$.
Fill can erase those residuals only if they lie in the editable support.
Without estimating a light source or BRDF, we construct a contact band as a geometric surrogate of that dependent set for upright, ground-contacted objects, the dominant regime of RemovalBench.
Fig.~\ref{fig:pipeline} (bottom branch) extracts the floor-contact segment $\mathcal{C}$ (the lowest rows of $M_{\mathrm{obj}}$ under an upright-image convention) and forms the one-sided band in Eq.~\eqref{eq:shadow-band},
\begin{equation}
    M_{\mathrm{shadow}}=\{c+an+b\tau\mid c\in\mathcal{C},\,a\in[0,\sigma],\,|b|\leq\delta_x\}\cap\Omega,
    \label{eq:shadow-band}
\end{equation}
with scale-adaptive extents as in Eq.~\eqref{eq:wedge},
\begin{equation}
    (\sigma,\delta_x)=\left(\max(6,0.5h_{\mathrm{obj}}),\,\max(8,0.35w_{\mathrm{obj}})\right),
    \label{eq:wedge}
\end{equation}
and editable support as in Eq.~\eqref{eq:mflux},
\begin{equation}
    M_{\mathrm{flux}}
    =
    \operatorname{Dilate}\!\left(
        M_{\mathrm{obj}} \cup M_{\mathrm{shadow}};\,
        r=4
    \right).
    \label{eq:mflux}
\end{equation}
Here $\Omega$ is the discrete image grid, $n$ the image-plane contact normal, and $\tau$ its tangent.
For upright inputs we use $n=(0,1)$ and $\tau=(1,0)$; non-upright inputs need an externally supplied orientation, since we do not estimate support geometry.
Guided Fill can therefore rewrite the instance and its residual region jointly.
Relative coefficients are fixed across experiments; absolute extents adapt to $(h_{\mathrm{obj}},w_{\mathrm{obj}})$ (see Technical Appendix Table~1).
$I_{\mathrm{vis}}$ still gray-fills only $M_{\mathrm{obj}}$ (Eq.~\eqref{eq:grayfill}), so I-JEPA sees a clean masked-prediction input while Fill can revise contact residuals under $M_{\mathrm{flux}}$.
When orientation metadata are available, the same band is built after aligning the contact normal.
Additional operator details appear in Technical Appendix: Implementation details.

\paragraph{Completion reweighting.}
Native Fill is optimized for local completion and can look textured while still breaking surface continuation, leaving object identity, or mismatching illumination.
We keep $\mathbf{E}_{\mathrm{target}}$ (Eq.~\eqref{eq:jepa-target}) fixed and steer the Fill trajectory toward it with sparse projected-gradient steps (Eqs.~\eqref{eq:grad}--\eqref{eq:pin}).
A matched comparison of I-JEPA against CLIP and DINOv2 hole energies is in Table~\ref{tab:ablation-prior} and Technical Appendix: Guidance prior comparison.
At a guided step we decode a preview $\hat{I}_t$, embed it with the same frozen $\phi_{\mathrm{JEPA}}$, and compare object-patch tokens as in Eq.~\eqref{eq:token-pair},
\begin{equation}
    \mathbf{u}_i=\phi_{\mathrm{JEPA}}(\hat{I}_t)_i,\qquad \mathbf{v}_i=\mathbf{E}_{\mathrm{target},i},
    \label{eq:token-pair}
\end{equation}
yielding the alignment loss in Eq.~\eqref{eq:align}:
\begin{equation}
    \mathcal{L}_{\mathrm{align}}=\frac{1}{|\mathcal{I}_{\mathrm{mask}}|}\sum_{i\in\mathcal{I}_{\mathrm{mask}}}\left\|\mathbf{u}_i-\mathbf{v}_i\right\|_2^2.
    \label{eq:align}
\end{equation}
Both sides live in the same I-JEPA patch space: $\mathbf{v}_i$ is the cached target token and $\mathbf{u}_i$ is the current completion at that index.
Minimizing $\mathcal{L}_{\mathrm{align}}$ biases the trajectory so the decoded hole matches context-predictable structure; Fill still synthesizes appearance.
Restricting the sum in Eq.~\eqref{eq:align} to $\mathcal{I}_{\mathrm{mask}}$ leaves Fill free to clean residuals over the larger $M_{\mathrm{flux}}$: effect cleanup is driven by the expanded support and prompt, whereas I-JEPA constrains structure inside the object hole.

\paragraph{Guidance schedule.}
Guidance needs a reasonably formed decoded preview, but applying it at every step tends to over-constrain details that Fill already models well.
We therefore evaluate Eq.~\eqref{eq:align} only at $\mathcal{T}_{\mathrm{guide}}=\{4,2\}$ for $T{=}14$ (Fig.~\ref{fig:pipeline}; see Technical Appendix Table~1).
This sparse schedule adds two late corrections and leaves the remaining flow-matching steps unchanged.

\subsection{Latent Update}

At a guided transition, let $\mathbf{z}^{\mathrm{pin}}=\bar{\mathbf{z}}_{t-1}$ be the native Fill state and $P\in\{0,1\}^{d}$ the packed-latent image of $M_{\mathrm{flux}}$ (see Technical Appendix: Implementation details).
Differentiating $\mathcal{L}_{\mathrm{align}}$ in Eq.~\eqref{eq:align} through the frozen decoder and I-JEPA encoder gives $G=\nabla_{\mathbf{z}}\mathcal{L}_{\mathrm{align}}(\mathbf{z}^{\mathrm{pin}})$.
We form the unconstrained proposal in Eq.~\eqref{eq:grad},
\begin{equation}
    \tilde{\mathbf{z}}=\mathbf{z}^{\mathrm{pin}}-\eta G,
    \qquad \eta=0.45,
    \label{eq:grad}
\end{equation}
and project onto the editable subspace determined by $P$ as in Eq.~\eqref{eq:pin} (see Technical Appendix Table~1):
\begin{equation}
    \mathbf{z}^{+}
    =P\odot\tilde{\mathbf{z}}+(1-P)\odot\mathbf{z}^{\mathrm{pin}}.
    \label{eq:pin}
\end{equation}
Coordinates with $P=1$ may move; all others are restored to the current native Fill state $\mathbf{z}^{\mathrm{pin}}$ (not a frozen source encoding).
Eqs.~\eqref{eq:grad}--\eqref{eq:pin} therefore amount to a projected-gradient step on the editable subspace: guidance cannot rewrite outside-$P$ coordinates at that micro-step, while Fill remains responsible for appearance synthesis under $M_{\mathrm{flux}}$.

\begin{figure*}[!t]
    \centering
    \setlength{\tabcolsep}{0pt}%
    {\ewlab
    \begin{tabular}{@{}*{6}{>{\centering\arraybackslash}p{\dimexpr\textwidth/6\relax}@{}}}
    Input & Mask & Native FLUX.2 & OmniEraser & Ours & GT \\[-0.15em]
    \multicolumn{6}{@{}c@{}}{\includegraphics[width=\textwidth]{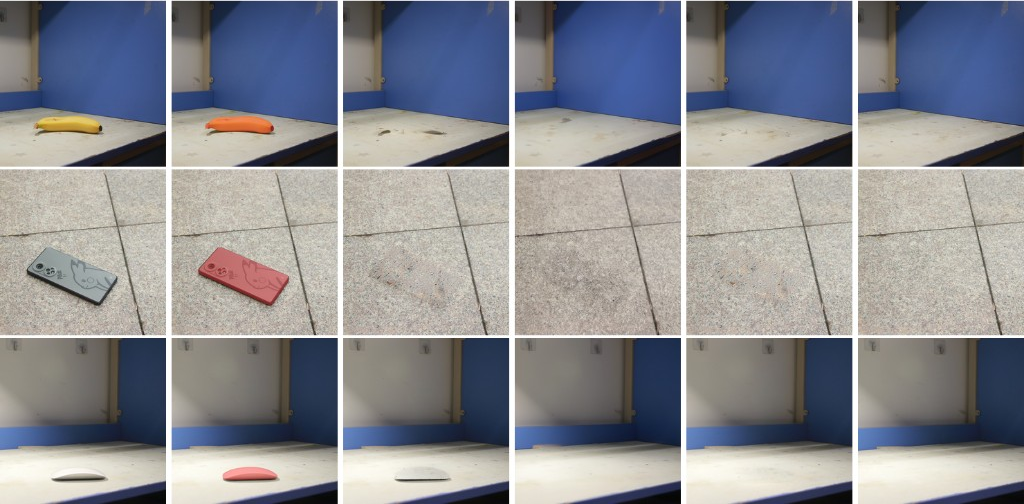}}
    \end{tabular}}
    \caption{Qualitative comparison on RemovalBench ($1024{\times}1024$, instance-only masks).
    Columns: Input, $M_{\mathrm{obj}}$, native FLUX.2-klein-4B, supervised OmniEraser, \ours{} (FLUX.2), clean-plate GT.
    Rows (top to bottom): banana on desk, phone on tile, desk mouse.}
    \label{fig:qual}
\end{figure*}%

\section{Experiments}
\label{sec:experiments}

\subsection{Experimental Setup}

\paragraph{Benchmarks.}
We evaluate on three public benchmarks (split sizes and scoring scripts are listed in Technical Appendix: Metrics):
\begin{itemize}
    \item RemovalBench~\cite{wei2025omnieraser}: $69$ real object--clean-plate pairs at $1024{\times}1024$ with instance-only masks under the OmniEraser protocol.
    Scores are full-image FID, CMMD, LPIPS (SqueezeNet), PSNR, and Aesthetic Score (AS); this split is our primary test for outside-mask cast shadows and contact shading.

    \item RORD-Val~\cite{wei2025omnieraser}: a larger clean-plate validation set evaluated with the same OmniEraser metrics and table layout as RemovalBench (FID, CMMD, LPIPS, PSNR, AS).

    \item DEFACTO-Val~\cite{jiang2025smarteraser}: synthetic removals under the SmartEraser protocol at $1024{\times}1024$, scored with CMMD, ReMOVE (SAM ViT-H), AlexNet LPIPS, SSIM, and PSNR via the public metric code.
\end{itemize}
Local \ours{} scores average five seeds (not best-of-five); see Technical Appendix: Multi-seed tests.

\paragraph{Baselines.}
We compare \ours{} against classical/diffusion inpainters (ZITS++, LaMa, BrushNet, FLUX.1-Fill), training-free editors (CLIPAway, Attentive Eraser), and supervised removers (OmniEraser, SmartEraser, PowerPaint) under each suite's official protocol~\cite{wei2025omnieraser,jiang2025smarteraser} (Tables~\ref{tab:main}--\ref{tab:defacto}); DEFACTO-Val further includes RePaint and SD-Inpaint.
All reported methods are evaluated in our environment on the official splits, masks, and ground truth, using the public metric implementations.
Native FLUX.2-klein-4B is the frozen-backbone operating point; Table~\ref{tab:ablation} reports matched controls.
Methods discussed only in Related Work (e.g., ObjectClear) are not included in these tables.

\begin{table*}[!t]
  \centering
  \footnotesize
  \setlength{\tabcolsep}{7.5pt}
  \setlength{\extrarowheight}{1.2pt}
  \renewcommand{\arraystretch}{1.08}
  \caption{Quantitative comparison on RemovalBench and RORD-Val under the OmniEraser evaluation suite~\cite{wei2025omnieraser}.
  \textbf{Bold} = best per column among listed methods; OmniEraser is a paired-training reference.
  All methods are scored in our environment with the official full-image FID/CMMD/LPIPS/PSNR/AS protocol (SqueezeNet LPIPS).
  Mask-restricted DINO is omitted from this table (Technical Appendix: Metrics).}
  \label{tab:main}
  \begin{tabular}{l *{5}{r} *{5}{r}}
  \toprule
  & \multicolumn{5}{c}{\textbf{RemovalBench}} & \multicolumn{5}{c}{\textbf{RORD-Val}} \\
  \cmidrule(lr){2-6}\cmidrule(lr){7-11}
  \textbf{Method}
    & FID\down & CMMD\down & LPIPS\down & PSNR\up & AS\up
    & FID\down & CMMD\down & LPIPS\down & PSNR\up & AS\up \\
  \midrule
  ZITS++
    & 108.38 & 0.374 & 0.158 & 19.62 & 4.56
    & 107.44 & 0.448 & 0.274 & 21.17 & 4.12 \\
  LaMa
    & 99.88 & 0.351 & 0.156 & 18.72 & 4.55
    & 100.21 & 0.294 & 0.229 & 20.50 & 4.23 \\
  BrushNet
    & 120.97 & 0.549 & 0.191 & 18.68 & 4.63
    & 234.87 & 0.745 & 0.293 & 16.51 & 4.41 \\
  FLUX.1-Fill
    & 115.79 & 0.487 & 0.193 & 17.12 & 4.59
    & 141.39 & 0.450 & 0.217 & 18.50 & 4.55 \\
  CLIPAway
    & 108.40 & 0.272 & 0.254 & 18.78 & 4.48
    & 81.28 & 0.545 & 0.278 & 16.36 & 4.19 \\
  Attentive Eraser
    & 55.49 & 0.232 & 0.146 & 20.60 & 4.50
    & 96.77 & 0.233 & 0.221 & 20.24 & 4.77 \\
  OmniEraser
    & \best{39.52} & 0.208 & \best{0.133} & 21.11 & 4.66
    & \best{43.71} & \best{0.153} & 0.166 & 22.13 & 4.99 \\
  FLUX.2 (native)
    & 113.92 & 0.496 & 0.184 & 22.70 & 4.62
    & 149.02 & 0.644 & 0.168 & 18.49 & \best{5.08} \\
  \midrule
  \ours{} (FLUX.2)
    & 52.69 & \best{0.108} & 0.175 & \best{24.36} & \best{4.74}
    & 55.59 & 0.305 & \best{0.114} & \best{23.45} & 4.84 \\
  \bottomrule
  \end{tabular}
\end{table*}

\begin{table}[!t]
  \centering
  \footnotesize
  \setlength{\tabcolsep}{3.0pt}
  \setlength{\extrarowheight}{1.2pt}
  \renewcommand{\arraystretch}{1.05}
  \caption{Quantitative comparison on DEFACTO-Val under the SmartEraser protocol~\cite{jiang2025smarteraser}
  (AlexNet LPIPS; ReMOVE with SAM ViT-H).
  \textbf{Bold} = best per column.
  All methods are scored in our environment with the public metric code.
  We report the five axes available for all listed methods under this setup.}
  \label{tab:defacto}
  \resizebox{\columnwidth}{!}{%
  \begin{tabular}{lrrrrr}
  \toprule
  \textbf{Method}
    & CMMD\down & ReMOVE\up & LPIPS\down & SSIM\up & PSNR\up \\
  \midrule
  ZITS++
    & 0.229 & 0.899 & 0.350 & 0.634 & 19.45 \\
  LaMa
    & 0.201 & 0.926 & 0.351 & 0.683 & 21.74 \\
  RePaint
    & 0.822 & 0.924 & 0.325 & 0.644 & 20.61 \\
  SD-Inpaint
    & 0.196 & 0.912 & 0.320 & 0.656 & 21.69 \\
  CLIPAway
    & 0.193 & 0.925 & 0.323 & 0.666 & 21.95 \\
  PowerPaint
    & 0.275 & 0.934 & 0.278 & 0.715 & 23.87 \\
  SmartEraser
    & \best{0.106} & 0.939 & 0.257 & \best{0.734} & 25.36 \\
  FLUX.2 (native)
    & 0.359 & 0.766 & 0.537 & 0.525 & 23.42 \\
  \midrule
  \ours{} (FLUX.2)
    & 0.166 & \best{0.943} & \best{0.253} & 0.649 & \best{29.57} \\
  \bottomrule
  \end{tabular}%
  }
\end{table}

\paragraph{Implementation details.}
Unless noted, \ours{} freezes FLUX.2-klein-4B~\cite{flux2024,esser2024sd3} ($T{=}14$ flow steps, CFG $w_{\mathrm{cfg}}{=}3.5$) and I-JEPA ViT-H/14-1K~\cite{assran2023ijepa}; neither Fill nor JEPA weights are updated, and there is no paired removal training.
I-JEPA sees a gray-filled instance $I_{\mathrm{vis}}$ (Eq.~\eqref{eq:grayfill}); $\mathbf{E}_{\mathrm{target}}$ is cached once per image.
Editable support uses dilation $r{=}4$ with contact extents $\sigma{=}\max(6,0.5h_{\mathrm{obj}})$ and $\delta_x{=}\max(8,0.35w_{\mathrm{obj}})$ (Eqs.~\eqref{eq:wedge}--\eqref{eq:mflux}).
Full \ours{} source-prefills from $\mathrm{Encode}(I\odot(1-M_{\mathrm{flux}}))$ and uses a shared shadow-aware erasure prompt rather than instance-specific captions (Technical Appendix: Prompts).
Packed gating $P$ is a nearest-neighbor downsample of $M_{\mathrm{flux}}$ onto the Fill latent grid, so guidance, pinning, and $\mathrm{FMStep}$ address the same coordinates.
Images use longest side $768$\,px at inference and are scored at $1024{\times}1024$; alignment uses $224{\times}224$ decoded previews.
Sparse guidance applies $\eta{=}0.45$ at $\mathcal{T}_{\mathrm{guide}}{=}\{4,2\}$ ($t_{\mathrm{end}}{=}4$, $n{=}2$) with projected locking (Eqs.~\eqref{eq:grad}--\eqref{eq:pin}).
These defaults (Technical Appendix Table~1) are held fixed across reported splits.
Local scores average five seeds $\mathcal{S}{=}\{22,\ldots,26\}$ with base seed $22$.
On a single NVIDIA A100 40\,GB GPU, Full \ours{} runs in ${\sim}5.3$--$6.3\,s$/image versus ${\sim}3.0\,s$ for native FLUX.2 under the same timing protocol.

\subsection{Main Results}
\label{sec:main-results}

Tables~\ref{tab:main} and~\ref{tab:defacto} report results under the OmniEraser and SmartEraser protocols, respectively.

\paragraph{RemovalBench and RORD-Val.}
On RemovalBench, \ours{} (FLUX.2) reduces native FLUX.2 CMMD from 0.496 to 0.108 and raises PSNR from 22.70 to 24.36\,dB.
These are the axes most sensitive to leftover object identity and support discoloration under the OmniEraser protocol.
Supervised OmniEraser remains stronger on FID and LPIPS (39.52 vs.\ 52.69 FID).
On RORD-Val, \ours{} improves native FID from 149.02 to 55.59 and CMMD from 0.644 to 0.305, with the best listed LPIPS and PSNR among the compared methods; OmniEraser still leads FID and CMMD.
Figure~\ref{fig:qual} shows outside-mask cleanup on three RemovalBench scenes.

\paragraph{DEFACTO-Val.}
On DEFACTO-Val (Table~\ref{tab:defacto}), \ours{} attains the best listed ReMOVE, LPIPS, and PSNR among the compared methods, while supervised SmartEraser remains strongest on CMMD and SSIM.
Every reported axis improves over native FLUX.2.

\subsection{Qualitative Results}

Figure~\ref{fig:qual} visualizes the Table~\ref{tab:main} gap on three RemovalBench pairs under the same instance-only masks.
The comparison isolates two failure modes that full-image metrics are designed to catch: incomplete erasure under a frozen Fill backbone, and residual object-linked effects that survive even when the instance hole looks plausible.

Native FLUX.2 struggles on every row.
On the banana desk (row~1), it leaves a dark smear where the fruit sat; on the tiled phone (row~2), a ghostly hole breaks the grout pattern instead of a clean completion; on the desk mouse (row~3), a flat gray silhouette remains with residual contact shading.
The object footprint may shrink, but the supporting plane still reads as edited.

Supervised OmniEraser~\cite{wei2025omnieraser} is consistently cleaner inside $M_{\mathrm{obj}}$, yet outside-mask effects still leak on the support.
It leaves faint discoloration on the desk and a dirty patch across the tile line: residuals that instance-only masks do not cover but that $M_{\mathrm{flux}}$ exposes to Fill.

\ours{} rewrites the object together with nearby contact residuals in one guided pass when those residuals fall inside $M_{\mathrm{flux}}$.
Desk tone and grout continue more coherently than under native Fill, which under-edits outside $M_{\mathrm{obj}}$.
OmniEraser, trained for this protocol, is often cleaner inside the instance. Remaining support stains show that instance-only masks still leave an effect-region problem, which $M_{\mathrm{flux}}$ exposes to Fill.

\subsection{Ablation Study}

We use two RemovalBench ablations under the Table~\ref{tab:main} protocol.
One disables one module at a time under the benchmark-aligned support; the other swaps only the frozen guidance prior with $M_{\mathrm{flux}}$, source prefill, shadow-aware prompting, and schedule fixed.
Full protocols, RORD-Val repeats, and multi-seed testing are in the Technical Appendix.

\paragraph{Module ablation.}
Table~\ref{tab:ablation} removes shadow-aware prompting, source prefill, or JEPA alignment from an otherwise Full stack.
Pure FLUX.2 establishes the native reference point (FID 113.92; CMMD 0.496).
The w/o Shadow Prompt and w/o Prefill variants improve on this reference but retain substantial FID/CMMD gaps to Full.
w/o JEPA is the strict expanded-support text-only control: it preserves $M_{\mathrm{flux}}$, source prefill, and the shadow-aware prompt but removes all feature guidance.
Its FID 59.30, CMMD 0.292, and PSNR 23.81\,dB indicate that spatial exposure and text conditioning alone do not account for Full's CMMD of 0.108.
These matched removals isolate I-JEPA under a fixed $M_{\mathrm{flux}}$; they do not compare contact-band geometry with isotropic dilation or $M_{\mathrm{obj}}$-only edits.
Full is best on every reported axis; the same ordering holds on RORD-Val (see Table~\ref{tab:ablation-rord}), with qualitative examples in Fig.~\ref{fig:ablation-qual}.

\begin{table}[!t]
  \centering
  \footnotesize
  \setlength{\tabcolsep}{3.4pt}
  \setlength{\extrarowheight}{1.2pt}
  \renewcommand{\arraystretch}{1.05}
  \caption{Ablations on RemovalBench ($n{=}69$) under the same local protocol as Table~\ref{tab:main} (FID/CMMD/LPIPS/PSNR/AS).
  Full matches \ours{} (FLUX.2), Pure FLUX.2 matches native FLUX.2, and w/o JEPA is the expanded-support text-only control.
  \textbf{Bold} = best per column.}
  \label{tab:ablation}
  \begin{tabular}{l@{\hspace{0.35em}} rrrrr}
  \toprule
  \textbf{Variant}
    & FID\down & CMMD\down & LPIPS\down & PSNR\up & AS\up \\
  \midrule
  Pure FLUX.2
    & 113.92 & 0.496 & 0.184 & 22.70 & 4.62 \\
  w/o Shadow Prompt
    & 77.21 & 0.380 & 0.181 & 23.20 & 4.65 \\
  w/o Prefill
    & 80.82 & 0.328 & 0.179 & 23.30 & 4.69 \\
  w/o JEPA
    & 59.30 & 0.292 & 0.177 & 23.81 & 4.70 \\
  \midrule
  Full \ours{}
    & \best{52.69} & \best{0.108} & \best{0.175} & \best{24.36} & \best{4.74} \\
  \bottomrule
  \end{tabular}
\end{table}

\begin{table}[!t]
  \centering
  \footnotesize
  \setlength{\tabcolsep}{3.4pt}
  \setlength{\extrarowheight}{1.2pt}
  \renewcommand{\arraystretch}{1.05}
  \caption{Guidance prior comparison on RemovalBench ($n{=}69$): only the frozen guidance backbone changes; $M_{\mathrm{flux}}$, prefill, shadow-aware prompt, and sparse schedule are fixed (Technical Appendix: Guidance prior comparison).
  w/ JEPA matches Table~\ref{tab:ablation} Full row; CLIP / DINOv2 patch rows use the same $\eta$ and $\mathcal{T}_{\mathrm{guide}}$.
  \textbf{Bold} = best per column among prior swaps.}
  \label{tab:ablation-prior}
  \begin{tabular}{l@{\hspace{0.35em}} rrrrr}
  \toprule
  \textbf{Prior}
    & FID\down & CMMD\down & LPIPS\down & PSNR\up & AS\up \\
  \midrule
  w/ CLIP patch
    & 107.60 & 0.150 & 0.235 & 23.81 & 4.63 \\
  w/ DINOv2 patch
    & 110.00 & 0.148 & 0.243 & 23.71 & 4.60 \\
  \midrule
  w/ JEPA
    & \best{52.69} & \best{0.108} & \best{0.175} & \best{24.36} & \best{4.74} \\
  \bottomrule
  \end{tabular}
\end{table}

\paragraph{Guidance prior comparison.}
Table~\ref{tab:ablation-prior} isolates whether the gains depend on context-conditioned prediction rather than frozen feature guidance in general.
w/ CLIP patch and w/ DINOv2 patch replace I-JEPA with patch-level semantic and neighborhood-matching energies plugged into the same gated update (Eqs.~\ref{eq:grad}--\ref{eq:pin}); details are in Technical Appendix: Guidance prior comparison.
Under these matched settings, both alternatives trail w/ JEPA on every reported axis. For example, CMMD is $0.150/0.148$ versus $0.108$, and FID is $107.6/110.0$ versus $52.69$.
All priors share the same $\eta$ and schedule. Loss scales differ across backbones, so the swap is an operating-point comparison rather than a fully retuned one.
CLIP patch is slightly stronger than DINOv2 patch on FID, LPIPS, and PSNR; neither matches I-JEPA at this operating point (Fig.~\ref{fig:prior-qual}).
The result is consistent with using I-JEPA as a hole-prediction prior rather than as a generic perceptual regularizer.

\subsection{Limitations and Future Work}

\ours{} is less reliable when the target occupies most of the frame: the hole prior is conditioned on visible context, so a near full-image mask leaves too little evidence (Fig.~\ref{fig:failure}).
The contact band assumes upright ground contact; side lighting, detached shadows, and mirror or water reflections fall outside this geometry.
As with other training-free editors, appearance inherits Fill-backbone biases.
A dependence-based support, for example I-JEPA or Fill-score residuals on unmasked patches, would be a natural extension when residuals detach from the contact contour.

\section{Conclusion}

We framed training-free object-and-effect removal as two coupled questions: which pixels may change, and what structure should occupy the instance hole.
A contact-band expansion exposes local residuals on the supporting plane to frozen Fill; I-JEPA supplies a representation-space hole target; sparse projected updates align the decoded completion inside the instance.
On standard clean-plate protocols the method improves a frozen FLUX.2 backbone under instance-only masks.
{\small
\bibliographystyle{ieeenat_fullname}
\bibliography{aaai2026}
}

\clearpage
\appendix
\section*{Supplementary Material}
\addcontentsline{toc}{section}{Supplementary Material}
\input{technical-appendix}
\end{document}

%% file: preamble.tex
\usepackage{cuted} %< for strip environment

\usepackage{currfile} %< for \currfilebase macro

\usepackage{caption} %< for "\captionof" commands (teaser)

%% file: technical-appendix.tex
% Technical appendix body, input by EW.tex after the main bibliography.
% Not a standalone document.
\section{Methodology Details}
\label{app:method}

This supplementary material defines the operators used in the Methodology, motivates the I-JEPA guidance prior, and records local properties of Eqs.~9--10.
The analysis establishes exact background preservation and one-step descent under a standard smoothness condition.
It does not imply global convergence, metric improvement, or superiority over alternative guidance priors; those questions are addressed empirically in Tables~3 and~4.

\subsection{Definitions}
\label{app:defs}

\begin{itemize}
    \item $p_{\mathrm{Fill}}(\cdot\mid I,M_{\mathrm{flux}})$: conditional law of frozen flow-matching Fill on the gated support.
    \item $q_{\mathrm{pred}}(I_{\mathrm{vis}},\mathcal{I}_{\mathrm{vis}},\mathcal{I}_{\mathrm{mask}})$: the deterministic masked-token prediction implemented by frozen I-JEPA.
    \item $\mathbf{E}_{\mathrm{target}}=q_{\mathrm{pred}}(I_{\mathrm{vis}},\mathcal{I}_{\mathrm{vis}},\mathcal{I}_{\mathrm{mask}})$: the one-shot cached hole-token target.
    \item $\mathcal{Z}\subseteq\mathbb{R}^{d}$: packed Fill latent space; $P\in\{0,1\}^{d}$: packed gate of $M_{\mathrm{flux}}$ (see Implementation details).
\end{itemize}

The latent-alignment energy on decoded hole patches is
\begin{equation}
\begin{aligned}
\mathcal{L}_{\mathrm{align}}(\mathbf{z};\,\mathbf{E}_{\mathrm{target}})
&=
\frac{1}{|\mathcal{I}_{\mathrm{mask}}|}
\sum_{i\in\mathcal{I}_{\mathrm{mask}}}
\|\mathbf{u}_i-\mathbf{v}_i\|_2^2,\\
\mathbf{u}_i &= \phi_{\mathrm{JEPA}}\bigl(\mathrm{Decode}(\mathbf{z})\bigr)_i,\\
\mathbf{v}_i &= \mathbf{E}_{\mathrm{target},i}.
\end{aligned}
\label{eq:app-align}
\end{equation}
Both $\mathbf{u}_i$ and $\mathbf{v}_i$ live in the frozen I-JEPA ViT-H/14 representation, matching Eq.~8.

Given a gate $P$, define the edit and background subspaces
\begin{align}
\mathcal{E}_P
&:=
\bigl\{\mathbf{z}\in\mathcal{Z}:\ (1-P)\odot\mathbf{z}=\mathbf{0}\bigr\},
\label{eq:app-edit}\\
\mathcal{B}_P
&:=
\bigl\{\mathbf{z}\in\mathcal{Z}:\ P\odot\mathbf{z}=\mathbf{0}\bigr\}.
\label{eq:app-bg}
\end{align}
Every $\mathbf{z}\in\mathcal{Z}$ admits the unique Euclidean split $\mathbf{z}=P\odot\mathbf{z}+(1-P)\odot\mathbf{z}$ with summands in $\mathcal{E}_P$ and $\mathcal{B}_P$.
Guidance may change only the $\mathcal{E}_P$ component.

For a cached pin $\mathbf{z}^{\mathrm{pin}}$ and proposal $\mathbf{u}$, the gated restoration map is
\begin{equation}
\Pi_P(\mathbf{z}^{\mathrm{pin}};\,\mathbf{u})
:=
P\odot\mathbf{u}
+(1-P)\odot\mathbf{z}^{\mathrm{pin}}.
\label{eq:app-pi}
\end{equation}
Geometrically, $\Pi_P(\mathbf{z}^{\mathrm{pin}};\,\cdot)$ is the affine projector onto the coset $\mathbf{z}^{\mathrm{pin}}+\mathcal{E}_P$.

\subsection{Design rationale and analysis scope}
\label{app:scope}

\paragraph{I-JEPA guidance prior.}
Object removal under a fixed mask admits a masked-completion view: visible context constrains what may plausibly replace the hole.
Because I-JEPA predicts masked patch representations from visible tokens, its frozen predictor provides a context-conditioned target $\mathbf{E}_{\mathrm{target}}$ without a removal-specific head.
The inductive bias is \emph{scene extension from context}: hole tokens should be predictable from surrounding structure.
Prop.~\ref{prop:task-align} formalizes why this interface matches I-JEPA pretraining more directly than CLIP or DINO priors; empirical dominance over CLIP/DINO guidance is not claimed without direct baselines.

\paragraph{Task alignment versus CLIP/DINO priors.}
Object removal under gray-filled $I_{\mathrm{vis}}$ is a \emph{masked patch prediction} problem: indices $\mathcal{I}_{\mathrm{mask}}$ should be explained by visible context $\mathcal{I}_{\mathrm{vis}}$.
Let $\hat{\mathbf{v}}_i$ denote the $i$-th hole token predicted from gray-filled context:
\begin{equation}
\begin{split}
\hat{\mathbf{v}}
&=
\psi_{\mathrm{JEPA}}\!\bigl(
\phi_{\mathrm{JEPA}}(I_{\mathrm{vis}}),
\mathcal{I}_{\mathrm{vis}},
\mathcal{I}_{\mathrm{mask}}
\bigr),\\
\mathcal{L}_{\mathrm{JEPA}}
&=
\frac{1}{|\mathcal{I}_{\mathrm{mask}}|}
\sum_{i\in\mathcal{I}_{\mathrm{mask}}}
\bigl\|
\hat{\mathbf{v}}_i
-
\mathrm{sg}\bigl[\phi_{\mathrm{JEPA}}(I)_i\bigr]
\bigr\|_2^2.
\end{split}
\label{eq:app-jepa-pretrain}
\end{equation}
where $\mathrm{sg}[\cdot]$ stops gradients through the EMA target encoder.
At inference, $\mathbf{E}_{\mathrm{target}}=\hat{\mathbf{v}}$ in Eq.~\ref{eq:app-align}.
The alignment objective reuses I-JEPA's patchwise prediction-target geometry, but evaluates the current decoded completion $\hat{I}_t=\mathrm{Decode}(\mathbf{z})$ against the cached prediction rather than against an EMA target encoding of a clean image.

Training-free CLIP and DINO priors optimize structurally different energies.
A CLIP-style term
\begin{equation}
\mathcal{L}_{\mathrm{CLIP}}(\mathbf{z})
=
-\cos\!\Bigl(
\mathrm{CLIP}_{\mathrm{img}}(\mathrm{Decode}(\mathbf{z})),\,
\mathrm{CLIP}_{\mathrm{text}}(t)
\Bigr)
\label{eq:app-clip}
\end{equation}
conditions on global image--text semantics rather than patch-wise continuation from $\mathcal{I}_{\mathrm{vis}}$; it defines no mask-indexed target computable from $I_{\mathrm{vis}}$ alone.
A DINO-style patch term
\begin{equation}
\mathcal{L}_{\mathrm{DINO}}(\mathbf{z})
=
\frac{1}{|\mathcal{I}_{\mathrm{mask}}|}
\sum_{i\in\mathcal{I}_{\mathrm{mask}}}
\bigl\|
\phi_{\mathrm{DINO}}(\mathrm{Decode}(\mathbf{z}))_i
-
\mathbf{r}_i
\bigr\|_2^2
\label{eq:app-dino}
\end{equation}
requires reference tokens $\mathbf{r}_i$ on the hole.
Without the clean plate, $\mathbf{r}_i$ must be approximated heuristically (neighbor copying, averaging, or iterative self-targeting), reintroducing the ambiguity removal seeks to resolve.
Neither alternative directly produces a mask-indexed, context-only prediction for the missing region under the interfaces considered here.

\begin{proposition}[Context-only completion target]
\label{prop:task-align}
For fixed $(I_{\mathrm{vis}},\mathcal{I}_{\mathrm{vis}},\mathcal{I}_{\mathrm{mask}})$, the cached target
$\mathbf{E}_{\mathrm{target}}=\hat{\mathbf{v}}$ depends only on visible context and mask indices.
Under Eqs.~\ref{eq:app-align}--\ref{eq:app-jepa-pretrain}, $\mathcal{L}_{\mathrm{align}}$ inherits I-JEPA's masked prediction-target geometry.
In contrast, $\mathcal{L}_{\mathrm{CLIP}}$ lacks mask-indexed context targets, and $\mathcal{L}_{\mathrm{DINO}}$ lacks a clean-plate-free reference $\mathbf{r}_i$ on $\mathcal{I}_{\mathrm{mask}}$.
Among these three frozen guidance priors, only I-JEPA implements context-conditioned hole prediction at inference without ground-truth plates.
\end{proposition}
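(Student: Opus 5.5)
The proof is a direct unfolding of the definitions in Eqs.~\ref{eq:app-align}--\ref{eq:app-dino}, taken claim by claim.

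\textbf{Context-only dependence.} By Eq.~\ref{eq:app-jepa-pretrain}, $\hat{\mathbf{v}}$ is the composition of two frozen deterministic maps. The first, $\phi_{\mathrm{JEPA}}$, is applied to $I_{\mathrm{vis}}$ and restricted to the tokens indexed by $\mathcal{I}_{\mathrm{vis}}$. The second, $\psi_{\mathrm{JEPA}}$, is queried at $\mathcal{I}_{\mathrm{mask}}$. By Eq.~\eqref{eq:grayfill}, $I_{\mathrm{vis}}$ depends on $I$ only through $I\odot(1-M_{\mathrm{obj}})$, because the hole is overwritten by the constant $\mathbf{g}$. Two inputs that agree off $M_{\mathrm{obj}}$ therefore produce the same $I_{\mathrm{vis}}$, and hence the same $\mathbf{E}_{\mathrm{target}}$. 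The target also does not depend on the Fill latent $\mathbf{z}$ or on the sampler seed, since it is cached once before sampling. This gives the first claim.

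\textbf{Inheritance of prediction-target geometry.} Compare Eq.~\ref{eq:app-align} with Eq.~\ref{eq:app-jepa-pretrain} term by term. Both are the same normalized squared $\ell_2$ sum over $\mathcal{I}_{\mathrm{mask}}$ in the same ViT-H/14 patch space. In both, one side is the predictor output $\hat{\mathbf{v}}_i$ and the other side is the encoder $\phi_{\mathrm{JEPA}}$ applied to a full image at index $i$. The only change is the image fed to the encoder: the clean image $I$ during pretraining is replaced by $\mathrm{Decode}(\mathbf{z})$ at inference. Identifying $\mathrm{sg}[\phi_{\mathrm{JEPA}}(I)]$ with $\phi_{\mathrm{JEPA}}(\mathrm{Decode}(\mathbf{z}))$ turns $\mathcal{L}_{\mathrm{align}}$ into the pretraining loss with the roles of the fixed and free sides swapped. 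This is the precise sense of ``inherits.'' I would state explicitly that no optimality or accuracy of the predictor is being claimed.

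\textbf{Contrasts and conclusion.} Eq.~\ref{eq:app-clip} reduces to a scalar computed from a global image embedding and a text embedding. No term in it is indexed by $i\in\mathcal{I}_{\mathrm{mask}}$, and no term is computed from $I_{\mathrm{vis}}$, so it defines no mask-indexed context target. Eq.~\ref{eq:app-dino} treats $\mathbf{r}_i$ as an input. The frozen DINO stack is an encoder only, with no predictor head, so the only intrinsic choice is $\mathbf{r}_i=\phi_{\mathrm{DINO}}(I^\star)_i$ for a clean plate $I^\star$, which is unavailable. Any other choice is an external heuristic rather than an output of the frozen prior. Among the three frozen priors, then, only I-JEPA supplies a context-conditioned hole prediction without ground-truth plates. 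This is a statement about the interfaces considered, not about the quality of the guidance.

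The main obstacle is the DINO claim. It is a definitional or scoping statement, not a theorem. I would make it rigorous by fixing the interface as ``frozen networks composed without auxiliary heuristics.'' Under that restriction, DINO-derived targets on $\mathcal{I}_{\mathrm{mask}}$ can only arise by encoding some full image. Encoding $I_{\mathrm{vis}}$ yields gray-patch tokens rather than predicted content, and encoding $I$ leaks the object itself.
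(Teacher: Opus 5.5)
Your proposal is correct and follows essentially the same route as the paper. The paper justifies the proposition by unfolding Eqs.~\ref{eq:app-align}--\ref{eq:app-dino} directly: $\hat{\mathbf{v}}$ is computed from gray-filled context alone, $\mathcal{L}_{\mathrm{align}}$ reuses the pretraining loss with the decoded completion in place of the clean-image target encoding, CLIP yields only a global image--text score, and DINO needs an external $\mathbf{r}_i$ that without the clean plate ``must be approximated heuristically.''

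One caution about your proposed rigorization of the DINO clause. Restricting to frozen networks does not by itself exclude $\mathbf{r}_i=\phi_{\mathrm{DINO}}(I_{\mathrm{vis}})_i$, which is clean-plate-free, mask-indexed, and built only from visible context. Ruling it out still depends on the informal judgment that these tokens are not predictions of hole content, which is the same scoping move the paper makes with ``under the interfaces considered here.''
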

\begin{remark}
Prop.~\ref{prop:task-align} is a \emph{task-alignment} statement, not an empirical dominance claim: it explains why I-JEPA supplies a well-defined completion prior for removal, while CLIP/DINO require auxiliary targets or global semantics.
Whether I-JEPA guidance outperforms CLIP/DINO guidance on RemovalBench is tested under matched \ours{} settings in Table~4.
\end{remark}

\paragraph{Scope of the formal results.}
Props.~\ref{prop:invar}--\ref{prop:descent} below are \emph{operator checks} for Eqs.~9--10.
They explain why test-time guidance does not drift the unmasked image and why each guided micro-step decreases $\mathcal{L}_{\mathrm{align}}$ on editable coordinates under an $L$-smoothness assumption.
They do \emph{not} establish optimality of the latent-alignment objective, convergence of the full flow trajectory, or a link from feature alignment to FID/CMMD/PSNR.

\paragraph{Gray-fill support.}
Gray-filling exactly $M_{\mathrm{obj}}$, together with excluding its patch indices from the visible set, removes object identity from the predictor input while retaining surrounding context.
Gray-filling the larger set $M_{\mathrm{flux}}$ would discard contact-surface tokens that Fill still needs; leaving object pixels would leak the instance into the guidance prior.

\paragraph{Sparse guidance schedule.}
Decoded previews must be sufficiently formed for meaningful feature comparison, while guidance at every transition can unnecessarily constrain appearance.
The sparse set $\mathcal{T}_{\mathrm{guide}}=\{4,2\}$ for $T{=}14$ therefore applies two late corrective updates and leaves the remaining Fill transitions native.
Hard locking (Eq.~10) further ensures guided steps cannot corrupt the visible coordinates that both models condition on.

\subsection{Effect-aware support}
\label{app:support}

If the editable mask is only $M_{\mathrm{obj}}$, the sampler is constrained to preserve cast effects outside that support.
The contact-aware band $M_{\mathrm{shadow}}$ extends the edit region toward the supporting plane; after dilation,
\begin{equation}
\mathrm{supp}(M_{\mathrm{obj}})
\subseteq
\mathrm{supp}(M_{\mathrm{flux}})
\subseteq
\mathrm{dilate}_r\bigl(\mathrm{supp}(M_{\mathrm{obj}})\cup W\bigr),
\label{eq:app-nest}
\end{equation}
where $W$ is the residual band from the Methodology.
Mapping masks to packed gates yields $P_{\mathrm{obj}}\le P_{\mathrm{flux}}$ entrywise and hence $\mathcal{E}_{P_{\mathrm{obj}}}\subseteq\mathcal{E}_{P_{\mathrm{flux}}}$.

\begin{proposition}[Enlarged edit support]
\label{prop:nest}
If $\mathbf{z}\in\mathbf{z}^{\mathrm{pin}}+\mathcal{E}_{P_{\mathrm{obj}}}$, then $\mathbf{z}\in\mathbf{z}^{\mathrm{pin}}+\mathcal{E}_{P_{\mathrm{flux}}}$.
Whenever $P_{\mathrm{flux}}$ contains additional active entries, it strictly enlarges the set of coordinates Fill may revise, a necessary condition for outside-mask cleanup.
This is a feasibility statement about the edit region, not a guarantee of clean-plate quality.
\end{proposition}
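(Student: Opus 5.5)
The plan is to reduce both claims of Prop.~\ref{prop:nest} to the entrywise order $P_{\mathrm{obj}}\le P_{\mathrm{flux}}$.

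\emph{Step 1: masks to gates.} First I would establish the order on packed gates. By Eq.~\ref{eq:app-nest} (equivalently Eq.~\eqref{eq:mflux}), $\mathrm{supp}(M_{\mathrm{obj}})\subseteq\mathrm{supp}(M_{\mathrm{flux}})$, since dilation with $r\ge 0$ is extensive and $M_{\mathrm{obj}}\subseteq M_{\mathrm{obj}}\cup M_{\mathrm{shadow}}$. The packed gate is a nearest-neighbor downsample, so each latent coordinate reads one fixed source pixel (replicated across channels). That operator is monotone in the mask, so $P_{\mathrm{obj}}\le P_{\mathrm{flux}}$ entrywise. If a different packing were used (for example max-pooling), I would only need it to be monotone, which I would note as the standing assumption.

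\emph{Step 2: subspace inclusion.} Next I would show $\mathcal{E}_{P_{\mathrm{obj}}}\subseteq\mathcal{E}_{P_{\mathrm{flux}}}$ using Eq.~\eqref{eq:app-edit}. Take $\mathbf{w}$ with $(1-P_{\mathrm{obj}})\odot\mathbf{w}=\mathbf{0}$. Wherever $P_{\mathrm{flux},j}=0$, Step 1 gives $P_{\mathrm{obj},j}=0$, hence $w_j=0$. Therefore $(1-P_{\mathrm{flux}})\odot\mathbf{w}=\mathbf{0}$.

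\emph{Step 3: first claim.} Given $\mathbf{z}=\mathbf{z}^{\mathrm{pin}}+\mathbf{w}$ with $\mathbf{w}\in\mathcal{E}_{P_{\mathrm{obj}}}$, Step 2 immediately gives $\mathbf{z}\in\mathbf{z}^{\mathrm{pin}}+\mathcal{E}_{P_{\mathrm{flux}}}$. Equivalently, I would check the fixed-point identity $\Pi_{P_{\mathrm{flux}}}(\mathbf{z}^{\mathrm{pin}};\mathbf{z})=\mathbf{z}$ from Eq.~\eqref{eq:app-pi}.

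\emph{Step 4: strictness and necessity.} Suppose some $j$ has $P_{\mathrm{flux},j}=1$ and $P_{\mathrm{obj},j}=0$. Then the basis vector $\mathbf{e}_j$ lies in $\mathcal{E}_{P_{\mathrm{flux}}}\setminus\mathcal{E}_{P_{\mathrm{obj}}}$, so the inclusion is strict. Moreover, $\mathcal{E}_P$ is the span of $\{\mathbf{e}_j: P_j=1\}$, which makes ``the set of revisable coordinates'' literally $\mathrm{supp}(P)$.

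For necessity, I would argue the contrapositive. Any guided or pinned update under $P_{\mathrm{obj}}$ satisfies $(1-P_{\mathrm{obj}})\odot\mathbf{z}^{+}=(1-P_{\mathrm{obj}})\odot\mathbf{z}^{\mathrm{pin}}$. So a residual whose latent coordinates lie outside $\mathrm{supp}(P_{\mathrm{obj}})$ cannot be altered by that step. Cleaning such a residual therefore requires those coordinates to enter the gate.

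\emph{Main obstacle.} The only delicate point is this necessity phrasing. The claim is about the projected step, and it does not constrain the internal Fill transition $\mathrm{FMStep}$, which I would treat as respecting the same gate by the paper's convention that guidance, pinning, and $\mathrm{FMStep}$ address the same coordinates. I would state the conclusion at that level and make explicit that no clean-plate quality claim follows.
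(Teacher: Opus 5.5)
Your proposal is correct and matches the paper's proof, which is your Steps 2--3: the entrywise order $P_{\mathrm{obj}}\le P_{\mathrm{flux}}$ turns the constraint $(1-P_{\mathrm{obj}})\odot(\mathbf{z}-\mathbf{z}^{\mathrm{pin}})=\mathbf{0}$ into $(1-P_{\mathrm{flux}})\odot(\mathbf{z}-\mathbf{z}^{\mathrm{pin}})=\mathbf{0}$. Your Step 1 (monotonicity of the nearest-neighbor packing) and Step 4 (strictness via $\mathbf{e}_j$, and the contrapositive for necessity) add detail the paper only asserts beforehand or leaves informal, and your caveat that necessity concerns the gated step rather than $\mathrm{FMStep}$ is appropriate.
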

\begin{proof}
From $P_{\mathrm{obj}}\le P_{\mathrm{flux}}$, the constraint $(1-P_{\mathrm{obj}})\odot(\mathbf{z}-\mathbf{z}^{\mathrm{pin}})=\mathbf{0}$ implies $(1-P_{\mathrm{flux}})\odot(\mathbf{z}-\mathbf{z}^{\mathrm{pin}})=\mathbf{0}$.
\end{proof}

\subsection{Gated update properties}
\label{app:gated}

In the method, the proposal at a guided step is $\mathbf{u}=\mathbf{z}^{\mathrm{pin}}-\eta G$ with $G=\nabla_{\mathbf{z}}\mathcal{L}_{\mathrm{align}}$, followed by $\Pi_P$ (Eqs.~9--10).
The next results formalize background preservation and one-step descent of the alignment energy; they certify the update operator, not end-to-end removal quality.

\begin{proposition}[Background preservation]
\label{prop:invar}
For every $\mathbf{u}\in\mathcal{Z}$,
\[
(1-P)\odot\Pi_P(\mathbf{z}^{\mathrm{pin}};\,\mathbf{u})=(1-P)\odot\mathbf{z}^{\mathrm{pin}}.
\]
Hence every coordinate with $P_i=0$ is unchanged by guidance.
\end{proposition}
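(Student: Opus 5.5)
The plan is to substitute the definition of the gated restoration map (Eq.~\ref{eq:app-pi}) and reduce the identity to a coordinatewise statement about the binary gate $P\in\{0,1\}^{d}$. Every operation involved is a Hadamard product or a sum in $\mathbb{R}^d$. So it suffices to verify the claim entrywise, for each index $i$.

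First, I would multiply $\Pi_P(\mathbf{z}^{\mathrm{pin}};\mathbf{u})=P\odot\mathbf{u}+(1-P)\odot\mathbf{z}^{\mathrm{pin}}$ by $(1-P)$. Distributivity and associativity of $\odot$ give
\[
(1-P)\odot\Pi_P(\mathbf{z}^{\mathrm{pin}};\mathbf{u})=\bigl((1-P)\odot P\bigr)\odot\mathbf{u}+\bigl((1-P)\odot(1-P)\bigr)\odot\mathbf{z}^{\mathrm{pin}}.
\]

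Second, I would use binarity of the gate. Since $P_i\in\{0,1\}$, we have $P_i(1-P_i)=0$ and $(1-P_i)^2=1-P_i$. The first term therefore vanishes identically, whatever $\mathbf{u}$ is, and the second term collapses to $(1-P)\odot\mathbf{z}^{\mathrm{pin}}$. This is the claimed identity.

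Third, the pointwise consequence is immediate. For any $i$ with $P_i=0$, the identity reads $\Pi_P(\mathbf{z}^{\mathrm{pin}};\mathbf{u})_i=\mathbf{z}^{\mathrm{pin}}_i$. Specializing to the proposal $\mathbf{u}=\tilde{\mathbf{z}}=\mathbf{z}^{\mathrm{pin}}-\eta G$ of Eq.~\ref{eq:grad} shows that the update $\mathbf{z}^{+}$ of Eq.~\ref{eq:pin} leaves these coordinates untouched. Equivalently, $\mathbf{z}^{+}-\mathbf{z}^{\mathrm{pin}}\in\mathcal{E}_P$, which matches the affine-projector description onto $\mathbf{z}^{\mathrm{pin}}+\mathcal{E}_P$.

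There is no real obstacle here; the proof is a direct computation. The only point needing care is that idempotence, $(1-P)\odot(1-P)=1-P$, depends on $P$ being exactly $\{0,1\}$-valued. That holds because $P$ is a nearest-neighbor downsample of the binary mask $M_{\mathrm{flux}}$. A soft or interpolated gate would break this step.

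I would also note the scope of the result. The identity certifies invariance only relative to $\mathbf{z}^{\mathrm{pin}}$, the native Fill state at that micro-step. It does not say the coordinates stay equal to the source encoding across the whole flow trajectory.
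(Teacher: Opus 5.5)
Your proof is correct and follows the paper's route: expand the definition of $\Pi_P$ and use $P\odot(1-P)=\mathbf{0}$ entrywise. Your idempotence step $(1-P)\odot(1-P)=1-P$ follows from that same binarity fact, and the paper's one-line proof uses it implicitly.
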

\begin{proof}
Expand Eq.~\ref{eq:app-pi} and use $P\odot(1-P)=\mathbf{0}$ entrywise.
\end{proof}

Assume $\mathcal{L}_{\mathrm{align}}$ is $L$-smooth on $\mathcal{Z}$ after freezing Decode and JEPA.
One guided micro-step is
\begin{equation}
\mathbf{z}^+
=
\Pi_P\bigl(\mathbf{z};\,\mathbf{z}-\eta G(\mathbf{z})\bigr),
\qquad
G(\mathbf{z})=\nabla_{\mathbf{z}}\mathcal{L}_{\mathrm{align}}(\mathbf{z}).
\label{eq:app-step}
\end{equation}

\begin{proposition}[One-step alignment descent]
\label{prop:descent}
If $0<\eta\le 1/L$, then
\[
\mathcal{L}_{\mathrm{align}}(\mathbf{z}^+)
\le
\mathcal{L}_{\mathrm{align}}(\mathbf{z})
-\frac{\eta}{2}\,\bigl\|P\odot G(\mathbf{z})\bigr\|_2^2.
\]
Moreover $\mathbf{z}^+$ and $\mathbf{z}$ share the same background coordinates (Prop.~\ref{prop:invar}).
\end{proposition}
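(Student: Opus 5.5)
The argument is the standard descent lemma for $L$-smooth functions, applied to the gated step of Eq.~\ref{eq:app-step}.

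\emph{Step 1: compute the displacement.} Expanding Eq.~\ref{eq:app-pi} with $\mathbf{z}^{\mathrm{pin}}=\mathbf{z}$ and $\mathbf{u}=\mathbf{z}-\eta G$ gives
\[
\mathbf{z}^+ = P\odot(\mathbf{z}-\eta G)+(1-P)\odot\mathbf{z} = \mathbf{z}-\eta\,P\odot G ,
\]
so $\Delta:=\mathbf{z}^+-\mathbf{z}=-\eta\,P\odot G$. This also settles the second claim at once: $(1-P)\odot\Delta=\mathbf{0}$, which is exactly Prop.~\ref{prop:invar} with $\mathbf{u}=\mathbf{z}-\eta G$.

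\emph{Step 2: apply the quadratic upper bound.} $L$-smoothness on $\mathcal{Z}$ (assumed after freezing Decode and JEPA) gives
\[
\mathcal{L}_{\mathrm{align}}(\mathbf{z}+\Delta)\le \mathcal{L}_{\mathrm{align}}(\mathbf{z})+\langle G,\Delta\rangle+\tfrac{L}{2}\|\Delta\|_2^2 .
\]

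\emph{Step 3: evaluate the two terms.} The key observation is that $P$ is a binary mask, so $P\odot P=P$. Hence
\[
\langle G,P\odot G\rangle=\|P\odot G\|_2^2 ,
\]
and the linear term equals $-\eta\|P\odot G\|_2^2$. The quadratic term equals $\tfrac{L\eta^2}{2}\|P\odot G\|_2^2$. The bound therefore becomes
\[
\mathcal{L}_{\mathrm{align}}(\mathbf{z}^+)\le \mathcal{L}_{\mathrm{align}}(\mathbf{z})-\eta\Bigl(1-\tfrac{L\eta}{2}\Bigr)\|P\odot G\|_2^2 .
\]

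\emph{Step 4: use the step-size condition.} Since $0<\eta\le 1/L$, we have $1-\tfrac{L\eta}{2}\ge\tfrac12$, which yields the stated decrease $\tfrac{\eta}{2}\|P\odot G\|_2^2$.

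No step is a real obstacle; the content lies in the hypotheses. The only substantive point is Step 3: projecting onto the coordinate subspace $\mathcal{E}_P$ turns the linear term into an exact squared norm, because $P$ is idempotent. The argument needs nothing beyond the smoothness assumption as stated.

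It is worth remarking what the proposition does not cover. Smoothness of the composition through the VAE decoder and ViT is assumed, not verified. The implemented step size $\eta=0.45$ is only covered when $L\le 1/0.45$.
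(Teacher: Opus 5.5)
Your proof is correct and matches the paper's argument step for step: the displacement $\mathbf{z}^+-\mathbf{z}=-\eta\,P\odot G(\mathbf{z})$, the $L$-smoothness quadratic upper bound, and the condition $\eta\le 1/L$. You also make explicit the identity $\langle G,P\odot G\rangle=\|P\odot G\|_2^2$ (from $P\odot P=P$), which the paper uses without stating.
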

\begin{proof}
Because $\mathbf{z}$ already satisfies the background constraint, Eq.~\ref{eq:app-step} gives $\mathbf{z}^+-\mathbf{z}=-\eta(P\odot G(\mathbf{z}))$.
$L$-smoothness then yields
$\mathcal{L}_{\mathrm{align}}(\mathbf{z}^+)\le
\mathcal{L}_{\mathrm{align}}(\mathbf{z})
-\eta\|P\odot G(\mathbf{z})\|_2^2
+\tfrac{L\eta^2}{2}\|P\odot G(\mathbf{z})\|_2^2$.
The result follows from $\eta\le 1/L$.
\end{proof}

\begin{remark}[Scope]
Props.~\ref{prop:invar}--\ref{prop:descent} are local, operator-level facts about Eqs.~9--10.
They do not assert global optimality along the flow.
Prop.~\ref{prop:task-align} motivates the I-JEPA guidance prior structurally; Table~3 tests its empirical contribution within \ours{}.
Prop.~\ref{prop:invar} rules out background drift, while Prop.~\ref{prop:descent} establishes only local decrease of the chosen feature objective; neither guarantees improved clean-plate metrics.
\end{remark}

\subsection{Stationary prior design}
\label{app:bilevel}

Precomputing $\mathbf{E}_{\mathrm{target}}$ once defines the following fixed-target surrogate over guided states:
\begin{equation}
\begin{aligned}
\min_{\{\mathbf{z}_t\}_{t\in\mathcal{T}_{\mathrm{guide}}}}
&\sum_{t\in\mathcal{T}_{\mathrm{guide}}}
\mathcal{L}_{\mathrm{align}}(\mathbf{z}_t;\,\mathbf{E}_{\mathrm{target}}) \\
&\text{s.t.}\quad
\mathbf{z}_t\in\mathbf{z}_t^{\mathrm{pin}}+\mathcal{E}_{P_t},
\end{aligned}
\label{eq:app-bilevel}
\end{equation}
where each pin $\mathbf{z}_t^{\mathrm{pin}}$ is produced by the preceding frozen FM transition.
The sampler does not jointly optimize this surrogate; rather, each guided step performs one projected update on its corresponding term.
Keeping $\mathbf{E}_{\mathrm{target}}$ fixed prevents a moving-target objective in which the prior follows the current completion.
This stationarity clarifies the update, but does not establish that I-JEPA is the optimal feature space for clean-plate agreement.

\section{Experimental Details}
\label{app:experiments}

This supplementary material specifies the protocol for every locally scored row, including implementation settings, multi-seed aggregation, statistical tests, metric definitions, and failure cases.

\subsection{Implementation details}
\label{app:impl}

\paragraph{Prompts and text conditioning.}
\label{app:prompts}
Both Fill runs use a short erasure prompt rather than instance-specific captions.
The default \textbf{shadow-aware} positive prompt used by Full \ours{} is
\begin{quote}
\ttfamily\small
Clean empty background, seamless inpainting, natural lighting,\\
no object, no person, no cast shadow, no contact shading, no text,\\
photorealistic.
\end{quote}
with negative prompt
\begin{quote}
\ttfamily\small
object, person, animal, text, watermark, logo, blurry, low quality,\\
extra limbs, distorted, silhouette, floating debris, shadow residual.
\end{quote}
The \textbf{w/o Shadow Prompt} ablation replaces this with a shorter generic prompt
\begin{quote}
\ttfamily\small
Clean empty background, seamless inpainting, photorealistic.
\end{quote}
(and drops ``shadow residual'' from the negative list), keeping all other modules fixed.
Classifier-free guidance is $w_{\mathrm{cfg}}{=}3.5$ for FLUX.2-klein-4B.
Neither $M_{\mathrm{flux}}$ nor JEPA alignment modifies the text embedding: the former changes only spatial conditioning, and the latter updates only latents selected by $P_t$.

\paragraph{Source prefill.}
In Full, Fill is initialized from $\mathrm{Encode}(I\odot(1-M_{\mathrm{flux}}))$ (Alg.~1), preserving visible context outside the editable support.
The \textbf{w/o Prefill} ablation gray-fills $M_{\mathrm{flux}}$ before encoding, removing those source cues while keeping $M_{\mathrm{flux}}$, the shadow-aware prompt, and JEPA guidance unchanged.

\paragraph{Image and latent geometry.}
Images are processed with longest side 768 pixels and resized to $1024{\times}1024$ for benchmark evaluation.
The VAE spatial stride is approximately 8, with 16-pixel alignment; latents are stored in FLUX packed format.

\paragraph{Packed-latent gating.}
$P_t$ is constructed by nearest-neighbor downsampling of $M_{\mathrm{flux}}$ to the VAE grid and packing the resulting cells into FLUX's latent-token layout.
Thus gating, pinning, and $\mathrm{FMStep}$ address identical latent coordinates.
JEPA alignment uses a separate pixel mask at decode resolution (nearest resize of $M_{\mathrm{obj}}$).

\paragraph{Hyperparameters.}
Table~\ref{tab:hyperparams} lists the default \ours{} settings used for all locally scored rows unless noted otherwise.
Fill and I-JEPA weights remain frozen; scale adaptivity in $M_{\mathrm{shadow}}$ comes from per-instance $(h_{\mathrm{obj}}, w_{\mathrm{obj}})$ (Eq.~5).

\paragraph{Hyperparameter selection.}
\label{app:hyper-select}
During development we swept a small grid on a held-out visual check set (not used for reported tables) and selected the Table~\ref{tab:hyperparams} operating point by clean-plate CMMD/PSNR on RemovalBench together with qualitative residual cleanup:
$\eta\in\{0.25,0.35,0.45,0.55\}$,
$t_{\mathrm{end}}\in\{6,4,2\}$ with $n\in\{1,2,3\}$ guided steps among late timesteps,
dilation $r\in\{2,4,6\}$,
FLUX CFG $w_{\mathrm{cfg}}\in\{2.5,3.5,5.0\}$,
and shadow coefficients in Eq.~5 within $\{\,0.4,0.5,0.6\,\}$ (vertical) and $\{\,0.25,0.35,0.45\,\}$ (horizontal).
The reported defaults are the selected setting; we did not retune per benchmark split after freezing this configuration.

\begin{table}[!t]
  \centering
  \scriptsize
  \setlength{\tabcolsep}{3pt}
  \renewcommand{\arraystretch}{1.1}
  \caption{Default \ours{} hyperparameters.}
  \label{tab:hyperparams}
  \begin{tabular}{@{}llll@{}}
  \toprule
  \textbf{Setting} & \textbf{Symbol} & \textbf{Value} & \textbf{Notes} \\
  \midrule
  Fill steps (FLUX.2-klein-4B) & $T$ & 14 & Frozen sampler \\
  Guided timesteps & $\mathcal{T}_{\mathrm{guide}}$ & $\{4,2\}$ & $t_{\mathrm{end}}{=}4$, $n{=}2$ \\
  Guidance step size & $\eta$ & 0.45 & Eq.~9 \\
  Mask dilation radius & $r$ & 4 & Eq.~6 \\
  Shadow extent (vertical) & $\sigma$ & $\max(6,0.5h_{\mathrm{obj}})$ & Eq.~5 \\
  Shadow extent (horizontal) & $\delta_x$ & $\max(8,0.35w_{\mathrm{obj}})$ & Eq.~5 \\
  CFG (FLUX.2-klein-4B) & $w_{\mathrm{cfg}}$ & 3.5 & Prompts \\
  Input longest side & --- & 768 px & Eval at $1024{\times}1024$ \\
  JEPA decode preview & --- & $224{\times}224$ & For $\mathcal{L}_{\mathrm{align}}$ \\
  Multi-seed base / set & --- & 22 / 22--26 & Multi-seed tests \\
  \bottomrule
  \end{tabular}
\end{table}

\paragraph{Runtime and compute infrastructure.}
\label{app:compute}
All locally scored experiments and timing runs use a single NVIDIA A100-40GB GPU (40\,GB HBM) on a Linux host with 512\,GB system RAM.
Software stack: Ubuntu 22.04, CUDA 12.1, Python 3.10, PyTorch 2.2, Diffusers-compatible FLUX.2-klein-4B loaders, and the public I-JEPA ViT-H/14 checkpoint.
Metric scripts follow the released OmniEraser and SmartEraser evaluation code (FID/CMMD/LPIPS/PSNR/AS or ReMOVE/SSIM as applicable).
We measure single-image wall-clock latency after two warm-up runs, excluding model loading and metric computation.
Runs use longest side 768/640 with $T{=}14$ flow steps.
\ours{} on FLUX.2-klein-4B averages ${\sim}5.3\,s$ under the default config used for RORD-Val and ${\sim}5.7$--$6.3\,s$ under the stronger JEPA config used for RemovalBench Full in Table~1.
End-to-end inference remains within single-digit seconds on an A100, with no paired removal training or parameter updates.
Native FLUX.2 without \ours{} averages ${\sim}3.0\,s$ per image under the same timing protocol.

\paragraph{Preprocessing.}
\label{app:preprocess}
Benchmark images and instance masks are taken from the public RemovalBench, RORD-Val, and DEFACTO-Val releases cited in the main paper.
Our preprocessing is limited to: (i) reading RGB images and binary masks; (ii) resizing with longest side 768 for inference; (iii) resizing outputs to $1024{\times}1024$ for protocol-aligned scoring; (iv) gray-filling $M_{\mathrm{obj}}$ for the I-JEPA branch and constructing $M_{\mathrm{flux}}$ as in the Methodology.
No additional learning-based preprocess or proprietary filters are used.
Inference code is available at \url{https://github.com/xiuwk0820-collab/PredErase}, subject to upstream Fill / I-JEPA checkpoint licenses.

\subsection{Benchmark splits}
\label{app:benchmarks}

\paragraph{RemovalBench.}
Following~\cite{wei2025omnieraser}, we evaluate on $69$ valid $1024{\times}1024$ real object--clean-plate pairs with instance-only masks.
Metrics are full-image FID, CMMD, LPIPS (SqueezeNet), PSNR, and Aesthetic Score (AS).
Because cast shadows and contact shading often lie outside $M_{\mathrm{obj}}$, this split is our primary stress test for effect-aware removal under clean-plate ground truth.

\paragraph{RORD-Val.}
RORD-Val applies the same OmniEraser metrics to a larger clean-plate validation set ($n{=}343$), covering broader scene scales and content variation.

\paragraph{DEFACTO-Val.}
DEFACTO-Val follows the SmartEraser protocol on synthetic removals at $1024{\times}1024$, scored with the public metric code on CMMD, ReMOVE (SAM ViT-H), AlexNet LPIPS, SSIM, and PSNR.
It provides a second community protocol beyond the OmniEraser real clean-plate setting.

\subsection{Metrics and evaluation protocol}
\label{app:metrics}

\paragraph{Protocol alignment.}
\label{app:fairness}
\textbf{Protocol-aligned} denotes a shared split, masks, ground truth, and metric implementation with OmniEraser (RemovalBench and RORD-Val) or SmartEraser (DEFACTO-Val); it does not imply retraining those methods on FLUX.2-klein-4B.
On RemovalBench and RORD-Val, Table~1 follows the OmniEraser Table~1 layout at $1024{\times}1024$ and reports classical and diffusion inpainters (ZITS++, LaMa, BrushNet, FLUX.1-Fill), training-free editors (CLIPAway, Attentive Eraser), supervised OmniEraser, native FLUX.2, and \ours{}, all scored in our environment under the same full-image metrics.
Recent supervised removers such as ObjectClear~\cite{zhao2026objectclear} are discussed in Related Work but omitted from Table~1.
Ablations in Table~3 reuse the RemovalBench scoring of Table~1; Table~\ref{tab:ablation-rord} reports the same FLUX.2-klein-4B variants on RORD-Val under the identical protocol.
On DEFACTO-Val, Table~2 follows the SmartEraser baseline layout (ZITS++, LaMa, RePaint, SD-Inpaint, CLIPAway, PowerPaint, SmartEraser, native FLUX.2, and \ours{}) on CMMD/ReMOVE/LPIPS/SSIM/PSNR via the public metric code.
For per-image LPIPS/PSNR on RemovalBench ($n{=}69$), the paired Wilcoxon test reported below confirms that Full \ours{} improves over native FLUX.2 at $p{\ll}0.05$.

\paragraph{Metric definitions.}
\label{app:metric-defs}
Unless noted, scores compare a method output $\hat{I}$ to the clean plate $I^{\mathrm{gt}}$ at $1024{\times}1024$ and are averaged over the test set; AS is reference-free.
\begin{itemize}
    \item \textbf{PSNR} (primary with CMMD): full-image pixel fidelity~\cite{wang2004ssim}; residual shadows outside $M_{\mathrm{obj}}$ lower the score.
    \item \textbf{CMMD}~\cite{jayasumana2024cmmd,radford2021clip} (primary): CLIP-space distributional discrepancy between predictions and clean plates; reported as in OmniEraser. Sensitive to global appearance, including lighting residuals.
    \item \textbf{LPIPS}~\cite{zhang2018lpips}: SqueezeNet features under the OmniEraser suite (RemovalBench and RORD-Val, full image) and AlexNet features under SmartEraser; the two settings are not numerically interchangeable.
    \item \textbf{AS}: LAION aesthetic predictor mean over outputs (no GT)~\cite{schuhmann2022laion}. Preference rather than clean-plate fidelity.
    \item \textbf{FID}~\cite{heusel2017fid}: Inception feature distance under the OmniEraser suite (RemovalBench and RORD-Val).
    \item \textbf{SSIM} / \textbf{ReMOVE} (DEFACTO-Val)~\cite{wang2004ssim}: structural similarity and SmartEraser's SAM-based removal completeness score, both via the public SmartEraser scripts.
\end{itemize}
We omit mask-restricted DINO distances from Table~1 because they use a different feature and normalization pipeline from our DINOv2 patch distances; placing them in one column would not constitute a valid comparison.
Primary claims use full-image FID/CMMD/LPIPS/PSNR/AS.
Image-level scores are means over the split and then across seeds; FID and CMMD summarize set-level distributions per seed before seed averaging.

\subsection{Ablation details}
\label{app:ablation-details}

\begin{figure*}[!t]
    \centering
    \setlength{\tabcolsep}{0pt}%
    {\ewlab
    \begin{tabular}{@{}*{5}{>{\centering\arraybackslash}p{\dimexpr\textwidth/5\relax}@{}}}
    Input & Pure FLUX.2 & w/o JEPA & Full (Ours) & GT \\[-0.15em]
    \multicolumn{5}{@{}c@{}}{\includegraphics[width=\textwidth]{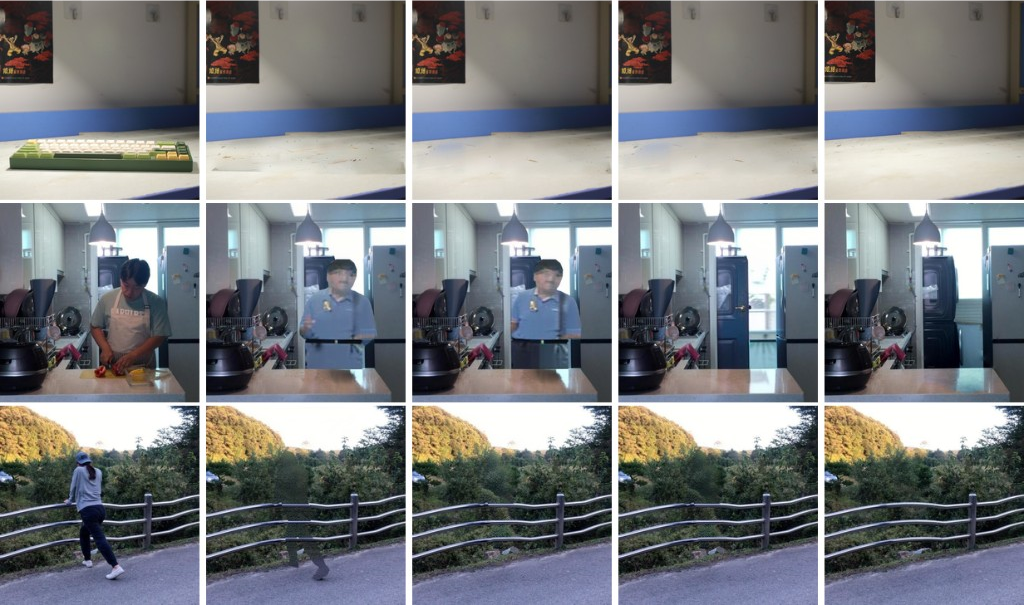}}
    \end{tabular}}
    \caption{Qualitative module ablation under the same instance-only masks.
    Columns: Input, Pure FLUX.2 (native), w/o JEPA (expanded $M_{\mathrm{flux}}$ without feature guidance), Full \ours{}, and clean-plate GT.
    Pure FLUX.2 leaves gray fills or ghost silhouettes; w/o JEPA improves cleanup but remains structurally incomplete; Full recovers support and background structure closer to GT.}
    \label{fig:ablation-qual}
\end{figure*}

Figure~\ref{fig:ablation-qual} visualizes the module ladder corresponding to Table~3.
On desk, kitchen, and outdoor scenes, Pure FLUX.2 leaves flat fills or ghost silhouettes inside the hole.
The w/o JEPA control (expanded $M_{\mathrm{flux}}$ with text conditioning only) reduces some residuals but still fails to reconstruct coherent support or background structure.
Full \ours{} removes the target and completes the revealed region closer to the clean plate, consistent with the CMMD/PSNR gap between w/o JEPA and Full in Table~3.

\paragraph{Guidance prior comparison (CLIP / DINOv2 / I-JEPA).}
\label{app:ablation-prior}

Table~4 compares frozen guidance priors on RemovalBench ($n{=}69$) while holding the remainder of Full \ours{} fixed:
$M_{\mathrm{flux}}$ from Eq.~6, source prefill (Alg.~1), the shadow-aware erasure prompt specified above, $T{=}14$, $\mathcal{T}_{\mathrm{guide}}{=}\{4,2\}$, $\eta{=}0.45$, projected locking to the current Fill state outside $P$ (Eqs.~9--10), and five-seed aggregation (below).
Only the test-time feature objective changes.

\textbf{Shared protocol.}
At each $t\in\mathcal{T}_{\mathrm{guide}}$, we decode $\hat{I}_t=\mathrm{Decode}(\mathbf{z}_t)$, evaluate a prior-specific $\mathcal{L}_{\mathrm{prior}}$ on hole patches, backpropagate through the frozen decoder only, and apply the same gated step and pin as Full \ours{}.

\textbf{w/ CLIP patch.}
We use OpenAI CLIP ViT-L/14 at $224{\times}224$~\cite{radford2021clip} with patch-level guidance on $\mathcal{I}_{\mathrm{mask}}$ under the shadow-aware erasure prompt specified above.

\textbf{w/ DINOv2 patch.}
We use a frozen DINOv2 ViT-L/14 backbone~\cite{oquab2024dinov2} and patch tokens on $\mathcal{I}_{\mathrm{mask}}$.
Hole references $\mathbf{r}_i$ are formed from a neighbor average of visible patch tokens (Eq.~\ref{eq:app-dino}): each hole index $i\in\mathcal{I}_{\mathrm{mask}}$ takes the mean embedding of its $k{=}8$ nearest visible-patch neighbors in $\phi_{\mathrm{DINO}}(\hat{I}_t)$.

\textbf{w/ JEPA.}
The I-JEPA row is Full \ours{}: $\mathbf{E}_{\mathrm{target}}$ is computed once from gray-filled $I_{\mathrm{vis}}$, and $\mathcal{L}_{\mathrm{align}}$ is evaluated only on $\mathcal{I}_{\mathrm{mask}}$ (Eqs.~8,~\ref{eq:app-jepa-pretrain}).
Seed-averaged RemovalBench scores for the prior swap are FID $107.6/110.0$ (CLIP / DINOv2 patch) versus $52.69$ (w/ JEPA), and CMMD $0.150/0.148$ versus $0.108$.

\begin{figure*}[!t]
    \centering
    \setlength{\tabcolsep}{0pt}%
    {\ewlab
    \begin{tabular}{@{}*{5}{>{\centering\arraybackslash}p{\dimexpr\textwidth/5\relax}@{}}}
    Input & w/ CLIP & w/ DINOv2 & w/ JEPA (Ours) & GT \\[-0.15em]
    \multicolumn{5}{@{}c@{}}{\includegraphics[width=\textwidth]{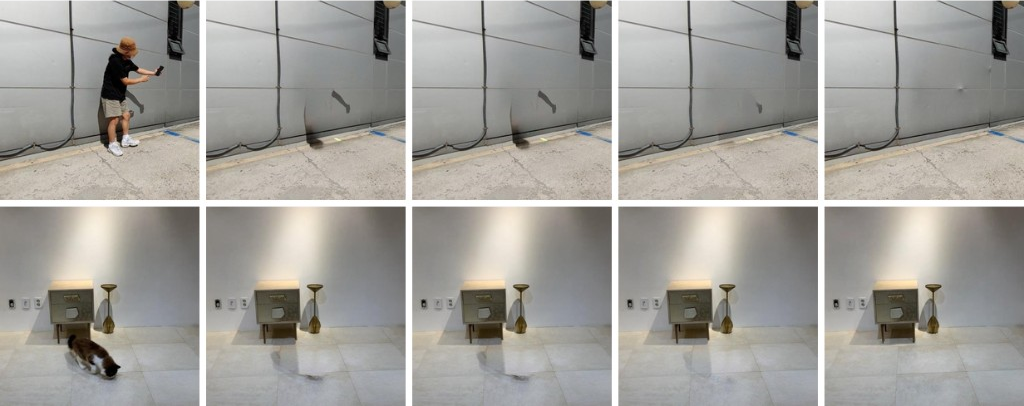}}
    \end{tabular}}
    \caption{Qualitative guidance prior comparison under the matched Full stack (same $M_{\mathrm{flux}}$, prefill, shadow-aware prompt, and sparse schedule as Table~4).
    Columns: Input, w/ CLIP patch, w/ DINOv2 patch, w/ JEPA (Full \ours{}), and clean-plate GT.
    CLIP and DINOv2 remove the instance but leave dark support-plane residuals; JEPA cleans cast/contact shading closer to GT.}
    \label{fig:prior-qual}
\end{figure*}

Figure~\ref{fig:prior-qual} shows the same ordering visually: under identical edit support and schedule, CLIP/DINOv2 guidance leaves cast-shadow smudges on the wall/ground and floor tiles, whereas JEPA recovers a cleaner support closer to the clean plate.

\paragraph{RORD-Val ablations.}
\label{app:ablation-rord}

Table~3 in the main paper isolates each \ours{} component on RemovalBench ($n{=}69$); Table~4 compares frozen guidance priors under the matched Full stack.
Table~\ref{tab:ablation-rord} repeats the five module variants on RORD-Val ($n{=}343$) under the OmniEraser full-image metric suite used in Table~1.
Pure FLUX.2 and Full \ours{} match the native and \ours{} (FLUX.2) rows in Table~1; intermediate rows disable one module at a time with all other settings fixed.
On this larger split, removing shadow-aware prompting or source prefill still leaves large FID/CMMD gaps to Full, and disabling JEPA guidance degrades every clean-plate axis relative to Full while remaining ahead of Pure FLUX.2 on FID/CMMD/LPIPS/PSNR.
Native Fill retains the highest AS (5.08), consistent with Table~1; we therefore treat FID/CMMD/LPIPS/PSNR as the primary ablation axes on RORD-Val.

\begin{table}[t]
  \centering
  \footnotesize
  \setlength{\tabcolsep}{3.2pt}
  \renewcommand{\arraystretch}{1.15}
  \caption{Ablations on RORD-Val ($n{=}343$) under the same local protocol as Table~1 (FID/CMMD/LPIPS/PSNR/AS).
  Full matches \ours{} (FLUX.2) and Pure FLUX.2 matches native FLUX.2 in Table~1.
  \textbf{Bold} = best per column on clean-plate axes (FID/CMMD/LPIPS/PSNR); AS is reference-free.}
  \label{tab:ablation-rord}
  \begin{tabular}{@{}lccccc@{}}
  \toprule
  \textbf{Variant}
    & FID\down & CMMD\down & LPIPS\down & PSNR\up & AS\up \\
  \midrule
  Pure FLUX.2
    & 149.02 & 0.644 & 0.168 & 18.49 & \best{5.08} \\
  w/o Shadow Prompt
    & 93.0 & 1.33 & 0.150 & 20.0 & 5.02 \\
  w/o Prefill
    & 98.5 & 1.14 & 0.138 & 20.3 & 4.98 \\
  w/o JEPA
    & 65.7 & 1.00 & 0.126 & 21.8 & 4.90 \\
  Full \ours{}
    & \best{55.59} & \best{0.305} & \best{0.114} & \best{23.45} & 4.84 \\
  \bottomrule
  \end{tabular}
\end{table}

\paragraph{Multi-seed runs and statistical testing.}
\label{app:seed}

Fill sampling is stochastic; a single random draw can therefore overstate or understate performance.
For all locally scored \ours{} and native FLUX.2 configurations reported in Tables~1,~3,~4, and~\ref{tab:ablation-rord}, we therefore repeat full-split inference over multiple independent random seeds.

Unless stated otherwise, we use five seeds $\mathcal{S}=\{22,23,24,25,26\}$, with 22 as the default seed.
Each seed $s\in\mathcal{S}$ defines a complete pass over the evaluation split under identical checkpoints, prompts, and hyperparameters.
For per-image metrics (LPIPS, PSNR, AS, and DEFACTO SSIM/ReMOVE), let $m_i^{(s)}$ denote the score of image $i$ under seed $s$.
We first form the seed-averaged image score $\bar{m}_i=\frac{1}{|\mathcal{S}|}\sum_{s\in\mathcal{S}} m_i^{(s)}$, then report the split mean $\frac{1}{N}\sum_{i=1}^{N}\bar{m}_i$ (e.g., RemovalBench $N{=}69$).
For set-level metrics (FID, CMMD), we compute one score per seed on the full prediction set and report the mean over $\mathcal{S}$.
For Full \ours{}, the seed-to-seed standard deviations of RemovalBench split means are $0.002$ for LPIPS and $0.07\,$dB for PSNR, indicating stable aggregate performance across these runs.
Tables report these seed-averaged point estimates; we do not bold variance bands in the main tables to match the OmniEraser / SmartEraser reporting layout.

We compare Full \ours{} with native FLUX.2 using a paired two-sided Wilcoxon signed-rank test on seed-averaged per-image scores $\{\bar{m}_i\}_{i=1}^{N}$ (SciPy \texttt{wilcoxon}; asymptotic approximation; zero differences omitted).
The null is that the median paired difference is zero; we reject at $\alpha{=}0.05$.
On RemovalBench ($N{=}69$), Full \ours{} versus native FLUX.2 yields
\begin{itemize}
    \item LPIPS: median paired improvement $0.009$, $W{=}382$, $p{=}8.0{\times}10^{-7}$;
    \item PSNR: median paired improvement $1.65\,$dB, $W{=}291$, $p{=}4.3{\times}10^{-8}$.
\end{itemize}
Both tests reject the null at $p{\ll}0.05$.
We apply the same paired protocol to ablations against Full (Table~3).
All four contrasts (w/o JEPA, w/o Prefill, w/o Shadow Prompt, Pure FLUX.2) are significant on both LPIPS and PSNR at $\alpha{=}0.05$, and remain so under Bonferroni correction over the four contrasts ($\alpha{=}0.0125$); the weakest is Full vs.\ w/o JEPA (LPIPS $p{=}9.4{\times}10^{-3}$, PSNR $p{=}3.1{\times}10^{-3}$).
FID/CMMD are set-level scores and are not paired at the image level; for those axes we rely on seed-averaged table entries.
Paired tests are reported for native FLUX.2 and the matched ablations; remaining baseline comparisons use split-level metrics.

\subsection{Failure cases on large targets}
\label{app:failure}

I-JEPA constructs $\mathbf{E}_{\mathrm{target}}$ exclusively from context outside $M_{\mathrm{obj}}$.
When the mask covers most of the image---for example, a near full-frame person or large foreground furniture---too little visible evidence remains to constrain the hole.
Guided Fill may then produce structural artifacts, texture drift, or incomplete effect cleanup even when $M_{\mathrm{flux}}$ covers the relevant region.
Representative outputs are shown in Fig.~\ref{fig:failure}.

\IfFileExists{figures/failure_cases.png}{%
\begin{figure}[!t]
    \centering
    \setlength{\tabcolsep}{0pt}%
    {\ewlab
    \begin{tabular}{@{}*{3}{>{\centering\arraybackslash}p{\dimexpr\linewidth/3\relax}@{}}}
    Input & Output & GT \\[-0.15em]
    \multicolumn{3}{@{}c@{}}{\includegraphics[width=\linewidth]{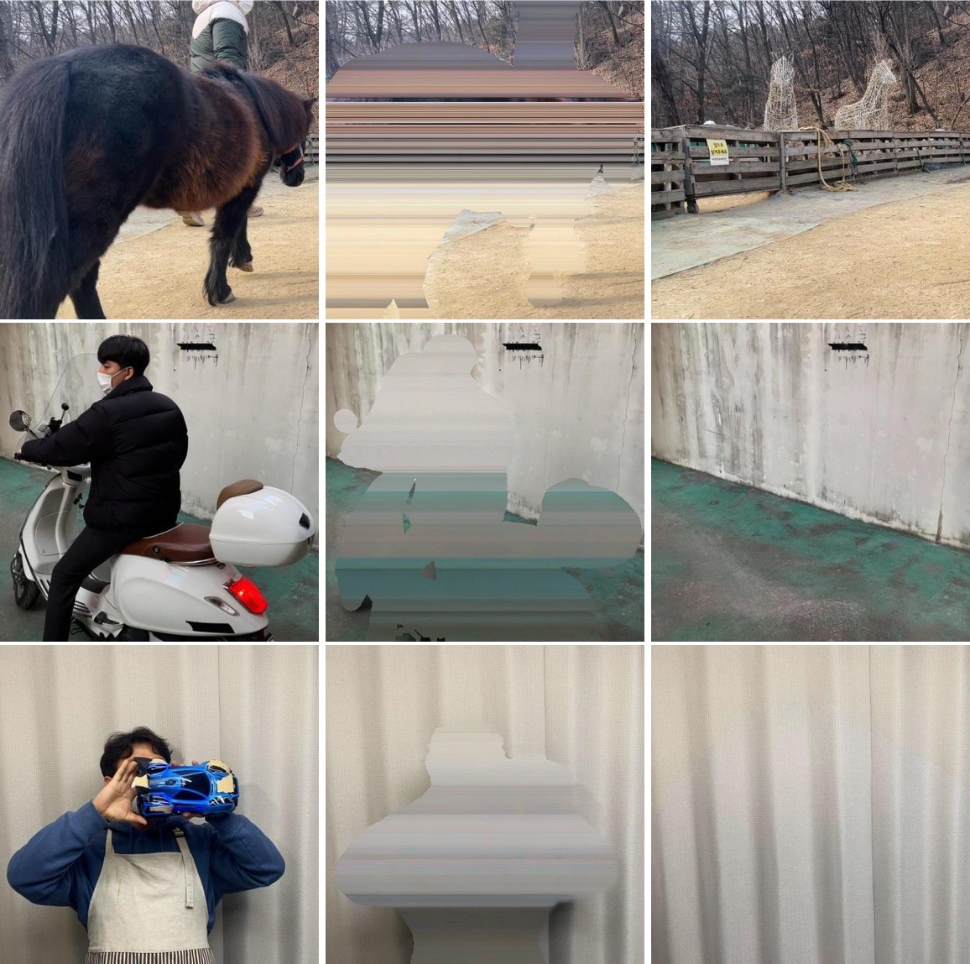}}
    \end{tabular}}
    \caption{Failure cases on large targets (RORD-Val).
    Columns: Input, \ours{} output, clean-plate GT.
    When the instance dominates the frame, visible context is thin and JEPA-guided Fill under-constrains the hole, yielding streaky or smeared silhouettes rather than coherent support reconstruction.}
    \label{fig:failure}
\end{figure}
}{}

\section{Limitations and Future Work}
\label{app:limitations}

\subsection{Limitations}

\paragraph{Benchmark-aligned geometry.}
RemovalBench and RORD-Val evaluate upright real scenes with instance-only masks and clean-plate ground truth; our contact-band $M_{\mathrm{shadow}}$ is sized from the lowest object rows under that convention (Eq.~5).
The construction targets cast shadows and contact shading on supporting surfaces, the dominant outside-mask residuals in these benchmarks; it is not a general physical model for reflections, floating objects, or arbitrary illumination effects.
Equation~4 assumes the downward image axis is the contact normal; when orientation metadata are available, the operator can be rotated accordingly.
Without such metadata, \ours{} does not infer non-horizontal supports.

\paragraph{Large-target context.}
When $M_{\mathrm{obj}}$ covers most of the frame, the visible complement is thin and JEPA-guided Fill can under-constrain the hole, producing streaky or incomplete completions (Fig.~\ref{fig:failure}).
This pattern is most visible on large foreground instances in RORD-Val, and the current method provides no dedicated fallback for this regime.

\paragraph{Training-free operating point.}
Fill and I-JEPA remain frozen; all adaptation occurs through test-time latent updates.
Supervised baselines consequently retain advantages on several reported axes, while \ours{} targets training-free clean-plate fidelity under instance-only masks.

\paragraph{Formal analysis scope.}
The Methodology Details above record implementation checks for hard locking and one-step alignment descent, together with task-alignment rationale for I-JEPA versus CLIP/DINO guidance priors (Prop.~\ref{prop:task-align}); Table~4 supplies the matched empirical comparison.
Operator properties (Props.~\ref{prop:invar}--\ref{prop:descent}) hold by proof under the stated smoothness and gating assumptions.
Design claims that are empirical in nature (module contributions and guidance prior choice) are validated on RemovalBench and RORD-Val in Tables~3,~4 and~\ref{tab:ablation-rord}.
Together these cover the paper's theoretical statements: formal propositions by proof, and the associated empirical consequences by matched experiments.

\subsection{Future Work}

The factorized design permits replacing Eq.~4 with an off-the-shelf instance-shadow detector while retaining the same JEPA-guided Fill stage.
Evaluating that extension requires a detector whose output and training data are compatible with the instance-only benchmarks.
Other priorities include hierarchical or multi-stage synthesis for large targets, broader frozen-backbone evaluation, video removal, and richer lighting models that combine predictive representations with temporal or physical cues.